\numberwithin{equation}{section}
\let\savedbigtimes\bigtimes
\let\bigtimes\relax
\let\bigtimes\savedbigtimes
\crefname{appsec}{Appendix}{Appendices}
\newtheorem{theorem}{Theorem}[section]
\newtheorem{lemma}[theorem]{Lemma}
\newtheorem{corollary}[theorem]{Corollary}
\newtheorem{conjecture}[theorem]{Conjecture}
\theoremstyle{definition}
\newtheorem{definition}[theorem]{Definition}
\newtheorem*{assumption*}{Assumption}
\crefname{lemma}{Lemma}{Lemmas}
\crefname{theorem}{Theorem}{Theorems}
\crefname{definition}{Definition}{Definitions}
\crefname{fact}{Fact}{Facts}
\crefname{claim}{Claim}{Claims}
\crefname{proposition}{Proposition}{Propositions}
\newcommand{\E}{\mathbb{E}}
\newcommand{\poly}{\mathrm{poly}}
\newcommand{\sgn}{\mathrm{sgn}}
\newcommand{\eps}{\varepsilon}
\renewcommand{\epsilon}{\varepsilon}
\newcommand{\N}{\mathbb{N}}
\newcommand{\R}{\mathbb{R}}
\renewcommand{\P}{\mathbb{P}}
\newcommand{\EE}{\mathbb{E}}
\newcommand{\beq}{\begin{equation}}
\newcommand{\eeq}{\end{equation}}
\renewcommand\Pr{\mathbf{P}}
\crefname{appsec}{Appendix}{Appendices}
\DeclareMathOperator{\TV}{\mathrm{TV}}
\DeclareMathOperator{\KL}{\mathrm{KL}}
\newcommand{\Z}{\mathbb{Z}}
\newcommand{\iid}{\text{i.i.d.}}
\newcommand{\CLWE}{\text{CLWE}}
\newcommand{\nn}{\textsf{nn}}
\newcommand{\NN}{\textsf{NN}}
\newcommand{\RR}{\mathbb{R}}
\newcommand{\sA}{\mathcal{A}}
\newcommand{\sF}{\mathcal{F}}
\newcommand{\clwe}{\mathrm{CLWE}}
\newcommand{\lwe}{\mathrm{LWE}}
\newcommand{\inner}[1]{\left\langle #1 \right\rangle}
\title[On the Hardness of Learning One Hidden Layer Neural Networks]{On the Hardness of Learning One Hidden Layer Neural Networks}
\author[S. Li, I. Zadik, M. Zampetakis]{Shuchen Li$^\mathsection$, Ilias Zadik$^\mathsection$, Manolis Zampetakis$^\mathsection$}
\thanks{
$^\mathsection$ Yale University. \\ Emails:  \texttt{shuchen.li@yale.edu}, \texttt{ilias.zadik@yale.edu}, \texttt{manolis.zampetakis@yale.edu}}
\begin{document}

\begin{abstract}
In this work, we consider the problem of learning one hidden layer ReLU neural networks with inputs from $\R^d$. We show that this learning problem is hard under standard cryptographic assumptions even when: (1) the size of the neural network is polynomial in $d$, (2) its input distribution is a standard Gaussian, and (3) the noise is Gaussian and polynomially small in $d$. Our hardness result is based on the hardness of the Continuous Learning with Errors (CLWE) problem, and in particular, is based on the largely believed worst-case hardness of approximately solving the shortest vector problem up to a multiplicative polynomial factor. 
\end{abstract}
\maketitle

\section{Introduction} \label{sec:intro}

In this paper, we examine the fundamental computational limitations of learning neural networks in a distribution-specific setting. Our focus is on the following canonical regression scenario: let \( f \) be an unknown target function that can be represented as a simple neural network, let \( \mathcal{D} \) be a \( d \)-dimensional distribution from which samples \( x_i \) are drawn, i.e., \( x_i \sim \mathcal{D} \), and let \( \eta_i \sim {N}(0, \sigma^2) \) be small Gaussian observation noise. The statistician receives \( m \) independent and identically distributed samples of the form \( (x_i, f(x_i) + \eta_i) \) for \( i = 1, \ldots, m \) with the goal of constructing an estimator \( \hat{f} \) that is computable in polynomial time and achieves a small mean squared error (MSE) on a new sample drawn from \( \mathcal{D} \). Specifically, we aim to minimize
\(
\mathbb{E}_{x \sim \mathcal{D}} \left[ \lvert \hat{f}(x) - f(x) \rvert^2 \right].
\)
We consider the following two objectives in terms of the MSE:
\begin{enumerate}
  \item \textit{Achieving Vanishing MSE}: Obtaining an MSE that approaches zero as the dimension \( d \) increases.
  \item \textit{Weak Learning}: Attaining an MSE slightly better than that of the trivial mean estimator. Formally, this means ensuring for large enough $d$:
\[
\mathbb{E}_{x \sim \mathcal{D}} \left[ \lvert \hat{f}(x) - f(x) \rvert^2 \right] \leq \operatorname{Var}_{x \sim \mathcal{D}}(f) - \frac{1}{\operatorname{poly}(d)}.
\]
\end{enumerate}

It is well known due to \cite{klivans2009cryptographic} that without further assumptions on the distribution $\mathcal{D}$, e.g., when $\mathcal{D}$ can be supported over the Boolean hypercube, learning even one-hidden layer neural networks is impossible (or ``hard''\footnote{Following a standard convention, we refer to a computational task as ``hard'' if it is impossibile for polynomial-time methods.}) for polynomial-time estimators under standard cryptographic assumptions. Given the success of neural networks in practice, a long line of recent work has attempted to study instead the canonical continuous input distribution case where $\mathcal{D}$ is the isotropic Gaussian, i.e., $\mathcal{D}={N}\left(0,I_d\right)$ which is also the setting that we follow in this work. Yet, despite a long line of research, the following important question remains open.
\begin{center}
  \emph{Is there a poly-time algorithm for learning 1 hidden layer neural networks when $\mathcal{D} = {N}\left(0,I_d\right)$?}
\end{center}

It is known that a single neuron, i.e., 0-hidden layer neural network, can be learned in polynomial time \cite{zarifisrobustly}, while neural networks with more that 2 hidden layers are hard to learn \cite{chen2022hardness}. Nevertheless, the case of 1-hidden layer neural networks is not well understood. In this paper we close this gap in the literature by answering the question above. We show that it is hard to learn 1-hidden layer neural networks under Gaussian input assuming the hardness of some standard cryptographic assumptions. Our result settles an important gap in the computational complexity of learning neural networks with simple input distributions $\mathcal{D}$ as we explain in Section~\ref{sec:contributions} below.
\subsection{Prior work}
We now provide more details on the literature prior to this work. We first remind the reader that, formally, polynomial-sized 1-hidden layer neural networks can be expressed using some width parameter $k=\poly(d),$ some weights $w_i \in \mathbb{R}^d$ and some $a_i, b_i \in \mathbb{R}$ as follows
\[f(x)=\sum_{i=1}^{k} a_i (\langle x_i,w_i\rangle+b_i)_+.\]  Now for this class of single hidden layer neural networks, a powerful algorithmic toolbox has been created under the Gaussian input assumption including the works of \cite{janzamin2015beating, brutzkus2017sgd, ge2017learning,zhong2017recovery,allen2018learning, zhang2019learning,bakshi2019learning, diakonikolas2020algorithms, awasthi2021efficient, SZB21-cosine-learning}. 
Interestingly most of these proposed algorithmic constructions assume the so-called ``realizable'' (or noiseless) case where $\sigma=0$. Yet, with the important exception of the brittle lattice-based methods used in \cite{SZB21-cosine-learning}, the techniques used are customarily expected to be at leastly mildly robust to noise, and in particular generalize to the most realistic case where $\sigma$ is positive but polynomially small. Another yet significantly more concerning restriction of the above positive results is that they all require some \emph{assumptions on the weights}. For example, a common such required assumption is that the weights $w_i, i=1,2,\ldots,m$ are linearly independent (see e.g., \cite{awasthi2021efficient} and references therein). It is natural to wonder whether requiring any such assumption is necessary for any polynomial-time estimator to learn the class of one hidden layer neural networks, or simply an artifact of the employed techniques.

In that direction, researchers have managed to establish certain unconditional and conditional lower bounds for this problem. Specifically, in terms of conditional lower bounds, \cite{goel2020superpolynomial} and \cite{diakonikolas2020algorithms} proved that under a worst-case choice of weights the class of the so-called correlation Statistical Query (cSQ) methods (containing e.g., gradient descent with respect to the squared loss) fails to learn the class of one hidden layer neural networks with super-constant width even in the noiseless regime. With respect to unconditional lower bounds, \cite{SZB21-cosine-learning} has proven that under cryptographic assumptions (specifically the continuous Learning with Errors (CLWE) assumption), if the noise per sample $\eta_i$ is allowed to be polynomially small but \emph{adversarially chosen} (and not Gaussian) then no polynomial-time $\hat{f}$ can succeed\footnote{Formally, the lower bound in \cite{SZB21-cosine-learning} is about the cosine activation function (and not the ReLU we assume in this work). Yet, standard approximation results \cite{Bach16-approx} can transfer the result to one hidden layer neural networks at the cost of extra polynomially small additive approximation error (see also \cite[Appendix E]{SZB21-cosine-learning})}. Although both are quite interesting results, they unfortunately come with their drawbacks. The cSQ model is known in many settings to be underperforming compared to multiple other natural polynomial-time methods (see e.g., \cite{andoni2019attribute,chen2022learning,damian2024computational}), which arguably limits the generality of such an unconditional lower bound. Moreover, while \cite{SZB21-cosine-learning} is now a lower bound against all polynomial-time algorithms one could argue that the computational hardness arises exactly because of the addition of \emph{adversarial} noise and may not be inherent to learning one hidden layer neural networks. In particular, as we mentioned in our main question above, it remains an intriguing open problem in the above literature whether some polynomial-time estimator can in fact learn the whole class of polynomial-size one hidden layer neural networks under small Gaussian noise.

We note that a cleaner hardness picture has been established when the neural networks have at least two hidden layers. \cite{daniely2021local} has proven, via an elegant lifting technique, that cryptographic assumptions (specifically the existence of a local pseudorandom generator with polynomial stretch) imply that learning three hidden layers neural networks is computationally hard even in the noiseless case where $\sigma=0$. Moreover, \cite{chen2022hardness} built on the lifting technique of \cite{daniely2021local} and proved that under different cryptographic assumptions (specifically the learning with rounding assumption) that learning two hidden layers neural networks is also computationally hard again in the noiseless case. On top of that, \cite{chen2022hardness} also proved a general Statistical Query (SQ) lower bound in this case of two hidden layer neural networks. Albeit these powerful recent results, it remains elusive whether a similar technique can prove the hardness for the more basic case of one hidden layer neural networks, something also highlighted as one of the main open questions in \cite{chen2022hardness}.

\subsection{Contribution} \label{sec:contributions}

In this work, we establish that under the CLWE assumption from cryptography \cite{bruna2020continuous} learning the class of one hidden layer neural network with polynomial small Gaussian noise is indeed computationally hard. Importantly, solving CLWE in polynomial-time implies a polynomial-time quantum algorithm that approximates within polynomial factors the \emph{worst-case} Gap shortest vector problem (GapSVP), a widely believed hard task in cryptography and algorithmic theory of lattices \cite{micciancio2009lattice}. Interestingly, our lower bound holds even under the requirement of \emph{weakly learning} the neural network. We present our findings in the following informal theorem.

\begin{theorem}[Informal; see Theorem~\ref{thm:mainTheorem}]
    Let $\mathcal{F}_k$ the class of width $k$ one hidden layer neural networks and arbitrary noise variance $\sigma=1/\mathrm{poly}(d).$ For any $k=\omega(\sqrt{d \log d}),$ if there exists a polynomial-time algorithm that can weakly learn $\mathcal{F}_k$ under Gaussian noise of variance $\sigma$ then there exists a polynomial-time quantum algorithm that approximates GapSVP within a $\poly(d)$ factor. 
\end{theorem}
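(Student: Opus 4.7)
The plan is to build a polynomial-time reduction from the decision version of the Continuous Learning with Errors problem to weakly learning $\mathcal{F}_k$, following the template that powers the cosine-activation lower bound of \cite{SZB21-cosine-learning} but using a one-hidden layer ReLU realization of a periodic target instead of a single cosine neuron. In decision CLWE with modulus $\gamma = \Theta(\sqrt{d})$ and noise scale $\beta = 1/\poly(d)$, one receives samples $(x_i, y_i) \in \R^d \times [0,1)$ with $x_i \sim N(0, I_d)$, and must distinguish the planted case $y_i = \gamma \inner{x_i, w} + e_i \bmod 1$, for some unknown unit vector $w$ and $e_i \sim N(0, \beta^2)$, from the null case $y_i \sim \Unif([0,1))$. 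I would transform each CLWE sample into a labeled example $(x_i, \tilde{y}_i)$ so that in the planted case $\tilde{y}_i \approx f_w(x_i) + \eta_i$ for some $f_w \in \mathcal{F}_k$ and fresh Gaussian $\eta_i$, while in the null case $\tilde{y}_i$ is essentially independent of $x_i$; a weak learner for $\mathcal{F}_k$ would then supply a CLWE distinguisher.

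The main technical ingredient is a polynomial-size ReLU realization of a suitably high-frequency periodic function. Fix a $1$-periodic Lipschitz function $\phi : \R \to \R$ of bounded amplitude (for concreteness, a smoothed triangle wave). With high probability under $x \sim N(0,I_d)$, the projection $\gamma \inner{x, w}$ lies in an interval of length $O(\gamma \sqrt{\log d})$, on which $\phi(\gamma \inner{\cdot, w})$ is piecewise linear with $O(\gamma \sqrt{\log d})$ breakpoints. Since any univariate $N$-piece piecewise-linear function is a combination of $O(N)$ ReLUs, the function $x \mapsto \phi(\gamma \inner{x,w})$ can be represented on the bulk region by a one-hidden layer ReLU network $f_w$ with $k = O(\gamma \sqrt{\log d}) = O(\sqrt{d \log d})$ units, up to additive error $\eps \le 1/\poly(d)$ outside a negligible-probability tail event. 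This width requirement is precisely matched by the hypothesis $k = \omega(\sqrt{d \log d})$.

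Given a CLWE sample $(x_i, y_i)$, the reduction outputs $(x_i, \tilde{y}_i)$ with $\tilde{y}_i := \phi(y_i) + \eta_i$, where $\eta_i \sim N(0, \sigma^2)$ is freshly drawn and $\sigma = 1/\poly(d)$ is chosen polynomially larger than $\gamma \beta + \eps$. In the planted case, periodicity of $\phi$ removes the $\bmod 1$, Lipschitzness absorbs $e_i$, and the ReLU approximation replaces $\phi(\gamma\inner{x_i,w})$ by $f_w(x_i)$, so $\tilde{y}_i = f_w(x_i) + \Delta_i + \eta_i$ with $|\Delta_i| \ll \sigma$ on the bulk event. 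Standard noise flooding then gives that the law of $\tilde{y}_i \mid x_i$ is $1/\poly(d)$-close in total variation to $f_w(x_i) + N(0, \sigma^2)$, matching the hypotheses of the weak learner. In the null case, $\tilde{y}_i$ depends only on $y_i$ and fresh noise, hence is independent of $x_i$. To extract a distinguisher I would run the purported learner to obtain $\hat f$ and estimate $\Cov(\hat f(x), \tilde y)$ on fresh samples: the identity $\E[(\hat f - f_w)^2] = \Var(\hat f) + \Var(f_w) - 2\Cov(\hat f, f_w) + (\E \hat f - \E f_w)^2$ turns the weak-learning guarantee into $\Cov(\hat f, f_w) \ge 1/\poly(d)$ in the planted case, while in the null case this covariance is zero up to $1/\poly(d)$ sampling fluctuations on $\poly(d)$ fresh samples. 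The main obstacle I anticipate is balancing the polynomially related parameters $\gamma, \beta, \eps, \sigma$ so that the CLWE regime stays hard, the induced ReLU width stays $\omega(\sqrt{d \log d})$, and the fresh Gaussian flood truly washes out the composite error $\Delta_i$ without swamping the signal; the tail events on which $\gamma \inner{x_i, w}$ exits the bulk must also be charged carefully to the total-variation slack.
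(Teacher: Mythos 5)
Your high-level plan closely mirrors the paper's architecture (CLWE $\to$ Lipschitz periodic neuron $\to$ ReLU network, with noise flooding as the bridge), but there is a genuine gap in the ReLU realization step, and the paper explicitly identifies it as the obstacle it had to work around. You claim that $x \mapsto \phi(\gamma\inner{x,w})$ can be represented on the bulk interval $[-R,R]$, with $R=O(\gamma\sqrt{\log d})=O(\sqrt{d\log d})$, by a one-hidden-layer ReLU network of width $k=O(\sqrt{d\log d})$ ``up to additive error $\eps\le 1/\poly(d)$.'' This is false for a smoothed (i.e.\ $C^1$) periodic $\phi$: with $k$ linear pieces available over an interval of length $\Theta(\sqrt{d\log d})$ and $k=\Theta(\sqrt{d\log d})$, each piece covers a segment of length $\Theta(1)$, so piecewise-linear approximation of any non-piecewise-linear $1$-periodic Lipschitz function incurs $\Theta(1)$ error per period, not $1/\poly(d)$. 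To drive $\eps$ down to $1/\poly(d)$ via approximation one would need $k$ to be polynomially larger than $R$, which only proves hardness for widths that are $\poly(d)\cdot\sqrt{d\log d}$, a strictly weaker statement than the claimed $k=\omega(\sqrt{d\log d})$. The paper's proof sketch spells this out verbatim: the standard approximation route from \cite{SZB21-cosine-learning} does not transfer to the Gaussian-noise setting because ``the induced approximation error can in principle be too large for our Gaussianization lemma to work.''

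The fix the paper uses, and that your proposal is missing, is to avoid approximation entirely. One chooses $\phi$ to be the un-smoothed triangle wave $\phi(x)=\left|x-\tfrac34-\lfloor x-\tfrac14\rfloor\right|-\tfrac14$, which is $1$-periodic, $1$-Lipschitz, and \emph{exactly} piecewise linear with two breakpoints per period; Lemma~\ref{lem:phi-nn-eq} then gives an explicit width-$O(R)$ ReLU network that is \emph{identically equal} to $\phi$ on $[-R,R]$ (zero approximation error on the bulk), and the only discrepancy is the Gaussian tail $|\gamma\inner{x,w}|>R$, controlled by $R=\gamma\sqrt{\omega(\log d)+2\log(1/\sigma)}$ so that the TV distance in Lemma~\ref{lem:phi-nn-tv-close} is $d^{-\omega(1)}$. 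Note your $R=O(\gamma\sqrt{\log d})$ is also slightly too small for the tail bound: it makes the bad event only polynomially, not superpolynomially, rare. Your covariance-based distinguisher is a reasonable alternative to the paper's held-out empirical-loss comparison, but it needs care to turn the weak-learning edge into a clean covariance bound without knowing $\E[f_w]$ and $\Var(f_w)$; the paper's test sidesteps this by comparing $\hat L_{D_1}(h)$ to $\hat L_{Q_1}(h)$ directly, which falls straight out of the weak-learning definition.
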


The above result settles the computational question of learning one hidden layer neural networks under polynomially small Gaussian noise. It is perhaps natural to wonder if we can also obtain a lower bound against even smaller levels of noise. 

First, we highlight that as we also mentioned in the Introduction, this is already a significantly small amount of noise; most natural algorithmic schemes in learning theory are mildly robust to noise, and therefore they can tolerate polynomially-small levels of Gaussian noise (if not a constant level). That being said, we also mention that one can prove a more general version of our result by combining the reductions between CLWE and classical LWE \cite{gupte2022continuous}; if a polynomial-time estimator can weakly learn $\mathcal{F}_k$ for some $\omega(\sqrt{d \log d \cdot \log(d/\sigma)})=k=\poly(d)$ under Gaussian noise of \emph{arbitrary} variance $\sigma=\sigma_d$ such that \(\log(1/\sigma) = \poly(d)\), then there exists also a polynomial-time quantum algorithm that approximates GapSVP within a factor $\poly(1/\sigma,d)$.  In particular, given that the current state-of-the-art algorithm for GapSVP remains since 1982 the celebrated Lenstra-Lenstra-Lov\'{a}sz (LLL) lattice basis reduction algorithm \cite{lenstra1982factoring} which has approximation factor $\exp(\Theta(d))$, we prove that any learning algorithm for one hidden layer neural networks succeeding for any $\exp(-o(\sqrt{d}))\leq \sigma$ would immediately imply a major breakthrough in the algorithmic theory of lattices (see Section~\ref{sec:superpolynomial} for a lengthier discussion on this and more details on this connection). 

The only case that is left open by our results is that some very brittle algorithm can learn in polynomial-time the class of one hidden layer neural networks (only) for exponentially small values of $\sigma$. In fact, that is proven to be the case using the brittle LLL-based methods for the case of cosine neuron in \cite{SZB21-cosine-learning}, and for multiple other ``noiseless'' settings in the recent learning theory literature \cite{andoni2017correspondence,zadik2018high,gamarnik2021inference,zadik2022lattice,diakonikolas2022non}. Yet, while we believe this is an interesting and potentially highly non-trivial theoretical question, the value of any such brittle algorithmic method in learning or statistics is unfortunately unclear since a non-negligible amount of noise always exists in these cases.

\subsection{Organization}

We begin in Section~\ref{sec:definitions-and-notations} with the the formulation of PAC-learning neural networks and the necessary preliminaries on lattice-based cryptography that we utilize to present our hardness result. Then in Section~\ref{sec:mainResult} we state formally our main result and we provide a proof sketch. In Sections~\ref{sec:clwe_lip} and \ref{sec:clwe_1hlnn} we provide the proof of our result in two steps. First, we show the hardness of learning any single periodic neural network, and then we show how this implies the hardness of learning 1-hidden layer neural networks.

\section{Preliminaries}
\label{sec:definitions-and-notations}

\subsection{Notations}
Throughout the paper we use the standard asymptotic notation, $o, \omega, O,\Theta,\Omega$ for comparing the growth of two positive sequences $(a_d)_{d \in \N }$ and $(b_d)_{d \in \N }$: we say $a_d=\Theta(b_d)$
if there is an absolute constant $c>0$ such that $1/c\le a_d/ b_d \le c$; $a_d =\Omega(b_d)$ or $b_d = O(a_d)$ if there exists  an absolute constant $c>0$ such that $a_d/b_d \ge c$; and $a_d =\omega(b_d)$ or $b_d = o(a_d)$ if $\lim _d a_d/b_d =0$. We say $x=\poly(d)$ if for some $r>0$ it holds $x=O(d^r)$. Let \(N(\mu, \sigma^2)\) denote the Gaussian distribution with mean \(\mu\) and variance \(\sigma^2\), and \(N(\mu, \Sigma)\) denote the multivariate Gaussian distribution with mean \(\mu\) and covariance \(\Sigma\).

\subsection{PAC-learning with Gaussian input distribution.} Our focus on this work is the problem of learning a sequence of real-valued function classes $\{\sF_d\}_{d \in \N}$, each over the standard Gaussian input distribution on $\RR^d$. The input is a multiset of i.i.d.~labeled examples $(x,y) \in \RR^d \times \RR$, where $x \sim N(0,I_d)$, $y = f(x) + \xi$, $f \in \sF_d$, and $\xi \sim {N}(0,\sigma^2)$ for some $\sigma^2>0.$ We denote by $D=D_f$ the resulting data distribution. The goal of the learner is to output an hypothesis $h: \mathbb{R}^d \rightarrow \mathbb{R}$ that is close to the target function $f$ in the squared loss sense over the Gaussian input distribution.


Throughout the paper we define $\ell: \RR \times \RR \rightarrow \RR_{\ge 0}$ the squared loss function defined by $\ell(y,z) = (y-z)^2$. For a given hypothesis $h$ and a data distribution $D$ on pairs $(x,z) \in \mathbb{R}^d \times \mathbb{R}$, we define its \emph{population loss} $L_D(h)$ over a data distribution $D$ by
\begin{align}\label{pop_loss_gen}
    L_D(h) = \EE_{(x,y) \sim D} [\ell(h(x),y)] \;. 
\end{align}

We now define the important notion of weak learning.
\begin{definition}[Weak learning]
\label{def:weak-learning}
Let $\eps=\eps(d) > 0$ be a sequence of numbers, $\delta \in (0,1)$ a fixed constant, and let $\{\sF_d\}_{d \in \N}$ be a sequence of function classes defined on input space $\RR^d$. We say that a (randomized) learning algorithm $\sA$ $\eps$-weakly learns $\{\sF_d\}_{d \in \N}$ over the standard Gaussian distribution if for every $f \in \sF_d$ the algorithm outputs a hypothesis $h_d$  such that for large values of $d$ with probability at least $1-\delta$
\begin{align*}
    L_{D_f}(h_d) \le L_{D_f}(\EE[f(x)])-\eps\;.
\end{align*}
Note that $\EE_{x \sim {N}(0,I_d)}[f(x)]$ is the best predictor agnostic to the input data in this setting. Hence, we refer to $L_D(\EE[f(x)])=\mathrm{Var}_{Z \sim N(0,I_d)}(f(Z)),$ as the trivial loss, and $\eps$ as the edge of the learning algorithm.
\end{definition}For simplicity, we refer to an hypothesis as \textit{weakly learning} a function class if it can achieve edge $\epsilon$ which is depending inverse polynomially in $d$. Moreover, we simply set from now on $\delta=1/3$ when we refer to weak learning.

\subsection{Worst-Case Lattice Problems}
\label{app:lattice-problems}
Some background on lattice problems is required for our work. We start with the definition of a lattice.
\begin{definition}
Given linearly independent $b_1,\ldots, b_d \in \mathbb{R}^d$, let 
\begin{align} \Lambda=\Lambda(b_1,\ldots,b_d)=\left\{\sum_{i=1}^{d} \lambda_i b_i : \lambda_i \in \mathbb{Z}, i=1,\ldots,d \right\}~, 
\end{align} 
which we refer to as the lattice generated by $b_1,\ldots,b_d$.
\end{definition}

A core worst-case \emph{decision} algorithmic problem on lattices is GapSVP. In GapSVP, we are given an instance of the form $(\Lambda,t)$, where $\Lambda$ is a $d$-dimensional lattice and $t \in \RR$, the goal is to distinguish between the case where $\lambda_1(\Lambda)$, the $\ell_2$-norm of the shortest non-zero vector in $\Lambda$, satisfies $\lambda_1(\Lambda) < t$ from the case where $\lambda_1(\Lambda) \ge \alpha(d) \cdot t$ for some ``gap'' $\alpha(d) \ge 1$. We refer to any such successful algorithm as solving GapSVP within an $\alpha(d)$ factor.

GapSVP is known to be NP-hard for ``almost'' polynomial approximation factors, that is, $2^{(\log d)^{1-\eps}}$ for any constant $\eps > 0$, assuming problems in $\mathrm{NP}$ cannot be solved in quasi-polynomial time~\cite{khot2005hardness,haviv2007tensor}. Moreover, importantly for this work, GapSVP is strongly believed to be computationally hard (even with quantum computation), for \emph{any} polynomial approximation factor $\alpha(d)$ \cite{micciancio2009lattice}, as described in the following conjecture. 
\begin{conjecture}[{\cite[Conjecture 1.2]{micciancio2009lattice}}]
\label{conj:svp-poly-factor}
There is no polynomial-time quantum algorithm that solves $\mathrm{GapSVP}$ to within polynomial factors.
\end{conjecture}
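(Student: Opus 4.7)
The statement to be proven is the long-standing Micciancio--Regev conjecture that no polynomial-time quantum algorithm solves $\mathrm{GapSVP}$ to within $\poly(d)$ factors. This is an open problem that has stood since at least 2009, not an auxiliary lemma of the paper; any proposal must therefore be read as an outline of the obstructions one would need to overcome, rather than a route I expect to finish. The overall plan would be to combine (a) a strengthening of existing NP-hardness reductions for $\mathrm{GapSVP}$, together with (b) a quantum-complexity argument ruling out $\mathrm{BQP}$ algorithms even for the approximate version.

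For strand (a), the state of the art is the NP-hardness of $\mathrm{GapSVP}$ at factor $2^{(\log d)^{1-\eps}}$ (Khot; Haviv--Tzur), obtained by gap amplification via tensor products of lattices. The natural first step is to push this amplification all the way to a $\poly(d)$ gap. Concretely, one would start from an instance with constant gap and iteratively apply a dimension-efficient tensor or block-lifting transformation $T$ whose action on the shortest-vector parameter is multiplicative: $\lambda_1(T(\Lambda)) \approx \lambda_1(\Lambda)^2$, while the ambient dimension grows only polynomially. Current amplification gadgets force one to stop at $\omega(1)$ iterations because the randomized sparsification steps used to control $\lambda_1$ degrade the soundness at each round. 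The hope would be that algebraic lattices coming from cyclotomic rings admit a structured, derandomized amplification whose parameters can be iterated $\Theta(\log d)$ times without loss.

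For strand (b), even a successful classical NP-hardness result at polynomial factor would not suffice, because the conjecture explicitly excludes quantum algorithms. One would need either an unconditional quantum circuit lower bound for a lattice problem — well beyond the reach of any technique on the horizon — or a worst-case-to-worst-case reduction from a problem already believed to lie outside $\mathrm{BQP}$ (for instance a variant of $\mathrm{CVP}$ or $\mathrm{BDD}$ with well-studied quantum-hard structure) to $\mathrm{GapSVP}$ at polynomial factor. Even a conditional result of the form ``$\mathrm{NP} \not\subseteq \mathrm{BQP}$ implies the conjecture'' would represent a significant advance, and would presumably route through a quantum-resistant version of the Khot amplification above.

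The principal obstacle, and the reason the conjecture has remained open, is that the worst-case-to-average-case machinery of Ajtai and Regev points the \emph{wrong direction}: it lets one conclude that cryptographic average-case problems (such as the $\mathrm{CLWE}$ problem used elsewhere in this paper) are at least as hard as $\mathrm{GapSVP}$, but gives no handle on proving $\mathrm{GapSVP}$ itself to be hard at polynomial factor. Meanwhile the only inherent hardness we can inject on the worst-case side is classical NP-hardness at sub-polynomial approximation, which is both quantitatively weaker than needed and silent about quantum algorithms. Closing both gaps simultaneously — the $2^{(\log d)^{1-\eps}} \to \poly(d)$ factor gap on the classical side and the classical-to-quantum gap on the complexity side — would constitute a major breakthrough in complexity theory and the algorithmic theory of lattices, and accordingly the statement is presented in the paper as a conjecture assumed, not proved.
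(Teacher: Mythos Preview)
Your reading is correct: the statement is a \emph{conjecture} that the paper cites from \cite{micciancio2009lattice} and subsequently \emph{assumes} as a hardness hypothesis, not a result the paper proves or even sketches. There is accordingly no proof in the paper to compare your proposal against; the paper's treatment of Conjecture~\ref{conj:svp-poly-factor} consists solely of stating it, remarking on the NP-hardness evidence at sub-polynomial factors, and then invoking it downstream via Theorem~\ref{thm:clwe-hardness} to conclude the conditional hardness of learning one hidden layer neural networks. Your closing sentence already captures this exactly.
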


\noindent We comment on the version of this conjecture for super-polynomial factors in Section~\ref{sec:superpolynomial}.

\subsection{Continuous Learning with Errors (CLWE)~\cite{bruna2020continuous}.}
Of crucial importance to us is the CLWE decision (or detection) problem. We define the CLWE distribution $\clwe_{\beta,\gamma}$ on dimension $d$ with frequency $\gamma=\gamma(d) \ge 0$, and noise rate $\beta = \beta(d) \ge 0$ to be the distribution of i.i.d. samples of the form $(x,z)\in \mathbb{R}^d \times [-1/2,1/2)$ where $x \sim N(0,I_d) , \xi \sim N(0,\beta)$, $w$ uniformly chosen from the sphere $\mathcal{S}^{d-1}$ and 
\begin{align}
z = \gamma \inner{x,w}+\xi \mod 1~. \label{CLWE}
\end{align}
Note that for the $\mathrm{mod}$ 1 operation, we take the representatives in $[-1/2,1/2)$.
The CLWE problem consists of detecting between i.i.d. samples from the CLWE distribution or the null distribution $N(0,I_d) \times U([-1/2,1/2))$ which we denote by $A_0$.

Given $\gamma=\gamma(d)$ and $\beta=\beta(d)$, we consider a sequence of decision problems $\{\clwe_{\beta,\gamma}\}_{d\in\N}$, indexed by the input dimension $d$, in which the learner receives $m$ samples from an unknown distribution $D$ such that either $D =A_{\beta,\gamma}$ or $D=A_0$. We consider the classical hypothesis testing setting that we aim to construct a polynomial-time binary-valued testing algorithm $\sA$ which uses as input the samples and distinguishes the two distributions. Specifically, $\sA$ takes values in $\{A_{\beta,\gamma}, A_0\}$ and seeks to output ``$A_{\beta,\gamma}$'' when $D=A_{\beta,\gamma}$ and ``$A_0$'' when $D=A_0$. Under this setup, we define the \emph{advantage} of $\sA$ to be the following difference, 
\begin{align*}
    \Bigl| \Pr_{x \sim (A_{\beta,\gamma})^{\otimes m}}[\sA(x) = A_0] - \Pr_{x \sim A_0^{\otimes m}}[\sA(x) = A_0] \Bigr|
    \; .
\end{align*}
Note that the advantage simply equals to one minus the sum of the type I and type II errors in statistical terminology. We call the advantage \emph{non-negligible} if it decays at most polynomially fast i.e., it is $\Omega(d^{-C})$ for some $C>0.$ 

\cite{bruna2020continuous} provided worst-case evidence based on the hardness of GapSVP (Conjecture~\ref{conj:svp-poly-factor}) that solving the CLWE decision problem with non-negligible advantage is computationally hard for any $\beta=1/\poly(d)$ as long as $\gamma \ge 2\sqrt{d}$. This is an immediate corollary of the result below combined with Conjecture~\ref{conj:svp-poly-factor}.

\begin{theorem}[{\cite[Corollary 3.2]{bruna2020continuous}}]
\label{thm:clwe-hardness}
Let $\beta = \beta(d) =1/\poly(d)$ and $2\sqrt{d} \leq \gamma =\gamma(d)= \poly(d)$. Then, if there exists a polynomial-time algorithm for $\clwe_{\beta,\gamma}$ with non-negligible advantage, then there is a polynomial-time quantum algorithm for solves $\mathrm{GapSVP}$ within $\poly(d)$ factors.
\end{theorem}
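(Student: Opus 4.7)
The plan is to establish this reduction by adapting Regev's celebrated quantum worst-case to average-case reduction from GapSVP to LWE, to the continuous setting of CLWE. I would decompose the argument into two major steps, using the CLWE distinguishing oracle as a black box with non-negligible advantage.

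First, I would invoke the classical reduction from $\mathrm{GapSVP}_{\alpha}$ with $\alpha = \poly(d)$ to the Discrete Gaussian Sampling problem $\mathrm{DGS}_{\Lambda, s}$ on a $d$-dimensional lattice $\Lambda$, originally due to Regev. It suffices to produce polynomially many samples from $D_{\Lambda, s}$ at a parameter $s$ only polynomially larger than the smoothing parameter of $\Lambda$. This step is purely classical and does not yet touch CLWE.

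Second and most importantly, I would reduce DGS on $\Lambda$ to $\clwe_{\beta,\gamma}$ via a quantum iteration. Starting from easy-to-generate broad discrete Gaussian samples on the dual lattice $\Lambda^{*}$, each round of the iteration proceeds as follows: (i) use the current lattice samples on $\Lambda^{*}$ to construct pairs $(x, z)$ with $x \sim N(0,I_d)$ and $z = \gamma \langle x, w\rangle + \xi \bmod 1$, where the hidden direction $w \in \mathcal{S}^{d-1}$ is determined by the lattice structure; (ii) boost the CLWE distinguisher into a search algorithm that recovers $w$, via standard hybrid and random self-reducibility arguments applied to the CLWE distribution; (iii) apply a quantum Fourier transform step in the spirit of Regev's ``quantum flip'' to convert the recovered $w$ into a sample from a narrower discrete Gaussian on $\Lambda$. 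Iterating this tightening step polynomially many times yields sufficiently narrow samples to solve $\mathrm{DGS}$, and hence $\mathrm{GapSVP}_{\poly(d)}$, in quantum polynomial time.

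The main technical obstacle lies in step (i): given only lattice samples, one must produce a distribution whose $x$-marginal is exactly standard Gaussian and whose hidden direction $w$ is uniform on $\mathcal{S}^{d-1}$, despite the underlying discrete structure of $\Lambda^{*}$. This requires careful use of smoothing-parameter bounds, Gaussian convolution, and rejection sampling to ``continuize'' the lattice construction without losing too much advantage. The specific parameter regime $\beta = 1/\poly(d)$ and $\gamma \ge 2\sqrt{d}$ arises by simultaneously (a) matching the CLWE noise rate $\beta$ to the smoothing parameter of $\Lambda^{*}$ so that the induced noise on $z$ is close to Gaussian mod $1$, and (b) ensuring enough density of lattice cosets so that the induced distribution on $w$ is statistically close to uniform on $\mathcal{S}^{d-1}$. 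Both conditions need to hold while preserving the non-negligible distinguishing advantage end-to-end, which is the most delicate part of the parameter accounting.
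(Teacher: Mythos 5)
The paper itself does not prove this statement; it imports it verbatim as Corollary 3.2 of Bruna, Regev, Song, and Stephens-Davidowitz and uses it as a black box, so there is no in-paper argument to compare against. Your outline does identify the correct high-level framework behind the cited result --- Regev's quantum worst-case-to-average-case iteration, with $\mathrm{GapSVP}$ first reduced to discrete Gaussian sampling and each round trading one CLWE-oracle call plus broad dual-lattice Gaussian samples for narrower ones --- so the overall shape of your plan is aligned with the reference.

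As a proof, however, it has genuine gaps. Step (ii), which proposes to ``boost the CLWE distinguisher into a search algorithm that recovers $w$, via standard hybrid and random self-reducibility arguments,'' does not go through as stated: Regev's decision-to-search reduction for LWE crucially exploits that the secret lies in $\mathbb{Z}_q^n$ for a prime $q$ (one tests each coordinate against finitely many residues), and this has no direct analogue when $w$ is a continuous direction on $\mathcal{S}^{d-1}$. In the cited reduction the decision oracle is not used to recover $w$; it acts as a verifier inside a bounded-distance-decoding procedure on the dual lattice, and it is the resulting BDD solver --- not a recovered $w$ --- that Regev's quantum Fourier step consumes. (Random self-reducibility via orthogonal rotations does hold for CLWE and serves to randomize the secret over $\mathcal{S}^{d-1}$, but that is a different role than the one you assign it.) Finally, the ``main technical obstacle'' you flag at the end --- producing a genuinely Gaussian $x$-marginal and a secret close to uniform on $\mathcal{S}^{d-1}$ from discrete lattice material via smoothing-parameter bounds, which is precisely where the constraints $\beta = 1/\poly(d)$ and $\gamma \geq 2\sqrt{d}$ enter --- is the substantive contribution of the cited paper; naming it is not resolving it. In short, your plan points at the right toolbox but leaves the key steps either described incorrectly or left open.
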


For simplicity, we say that some algorithm ``solves $\clwe$'' to refer to the fact that the algorithm has non-negligible advantage for the decision version of $\clwe$.

\section{Main Result} \label{sec:mainResult}

We begin with defining the class $\mathcal{F}_k^\NN$ of one hidden layer neural networks 
\[\mathcal{F}_k^\NN = \left\{ f_{W,b}(x) = \sum_{j=1}^k a_j(\langle w_j, x \rangle + b_j)_+ \mid a\in \R^k,  W\in \R^{d\times k}, b\in \R^{k}  \right\}. \label{eq:1nn}\] 

Our main result is the following.
\begin{theorem} \label{thm:mainTheorem_0}
Let \(d\in \N\), and arbitrary \(\sigma = \poly(d)^{-1}\), \(\varepsilon = \poly(d)^{-1}\), and \(k = \omega(\sqrt{d \log d})\). Then a polynomial-time estimator that \(\varepsilon \)-weakly learns the function class $\mathcal{F}_k^\NN$ over Gaussian inputs \(x \overset{\iid}{\sim} N(0,I_d)\) under Gaussian noise \(\xi\overset{\iid}{\sim} N(0,\sigma^2)\) implies a polynomial-time quantum algorithm that approximates $\mathrm{GapSVP}$ to within polynomial factors. 
\end{theorem}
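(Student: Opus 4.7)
The plan is a two-step reduction from CLWE to learning $\mathcal{F}_k^{\NN}$, mirroring the paper's division into Sections~\ref{sec:clwe_lip} and~\ref{sec:clwe_1hlnn}. In the first step, I would reduce $\clwe_{\beta,\gamma}$ with $\gamma = \Theta(\sqrt{d})$ and $\beta = 1/\poly(d)$ to the problem of (weakly) learning a single \emph{periodic} neuron of the form $f_w(x) = \phi(\gamma \langle w, x\rangle)$, where $\phi$ is a fixed $1$-Lipschitz $1$-periodic sawtooth function and $w$ lies on the sphere. In the second step, I would show that any such $f_w$ can be approximated on the effective Gaussian support by a one-hidden-layer ReLU network with $k = O(\sqrt{d \log d})$ neurons, so that a weak learner for $\mathcal{F}_k^{\NN}$ automatically yields a weak learner for the periodic function, which in turn yields a non-negligible-advantage distinguisher for CLWE; invoking Theorem~\ref{thm:clwe-hardness} concludes the proof.

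\textbf{Step 1: CLWE to a periodic target.} A CLWE sample is $(x, z)$ with $z = \gamma \langle w, x\rangle + \xi \bmod 1$ and $\xi \sim N(0, \beta^2)$. I would lift this to a regression problem by defining the target $f_w(x) = \phi(\gamma \langle w, x\rangle)$ and the label $y = \phi(\gamma \langle w, x\rangle + \xi) + \eta$, where $\eta \sim N(0, \sigma^2)$ is freshly added Gaussian noise (the reduction is allowed to inject noise of the size required by the problem). Since $\phi$ is $1$-Lipschitz and $\beta \ll \sigma$, the perturbation from $\xi$ only shifts the label by $O(\beta)$, which can be absorbed into the Gaussian noise budget by taking $\sigma$ to be a slight inflation of $\beta$. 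The corresponding null sample has label $\phi(z') + \eta$ with $z' \sim \Unif([-1/2,1/2))$, which is independent of $x$, so the best achievable loss equals the trivial loss $\Var(\phi(z')) + \sigma^2$. Thus a weak learner with edge $\eps=1/\poly(d)$ on the regression problem cannot have edge more than $\eps/2$ on the null input, giving a statistical test with non-negligible advantage by splitting the samples into a training batch and a validation batch and comparing empirical squared loss to the trivial loss.

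\textbf{Step 2: Approximating the periodic neuron by a ReLU network.} The function $\phi$ on an interval $[-M, M]$ is piecewise linear with one break per period, so it admits an exact representation as a sum of $O(\gamma M)$ shifted ReLUs. Taking $M = C\sqrt{\log d}$, the Gaussian marginal $\langle w, x \rangle \sim N(0,1)$ places mass at most $d^{-\Omega(1)}$ outside $[-M,M]$, so I can construct an explicit $\tilde f_w \in \mathcal{F}_k^{\NN}$ with $k = O(\gamma M) = O(\sqrt{d \log d})$ such that
\[
\EE_{x \sim N(0,I_d)} \bigl[(f_w(x) - \tilde f_w(x))^2\bigr] \le d^{-\Omega(1)},
\]
by clipping the sawtooth approximation to zero outside $[-M,M]$ and using Gaussian tail bounds. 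For the slightly larger width $k = \omega(\sqrt{d \log d})$ stated in the theorem, I have ample room to do this. Consequently, any $\eps$-weak learner for $\mathcal{F}_k^{\NN}$ on the data distribution $D_{\tilde f_w}$ is also a $\bigl(\eps - d^{-\Omega(1)}\bigr)$-weak learner for the periodic target $f_w$, since the two distributions differ only by the small bias $f_w - \tilde f_w$ and one can adjust the constant predictor accordingly.

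\textbf{Main obstacle and finishing.} The main subtlety is controlling all errors in the right normalization so that the non-negligible edge $1/\poly(d)$ survives the chain of reductions: the mod-$1$ wrap-around of $\xi$, the periodic smoothing, the ReLU approximation error, and the additional Gaussian noise all have to be smaller than the edge. This essentially forces the polynomial scalings between $\sigma$, $\beta$, $\eps$, and the approximation error to be chosen carefully, but because all quantities are $1/\poly(d)$, this is a bookkeeping argument rather than a conceptual one. Once these parameters are pinned down, a weak learner for $\mathcal{F}_k^{\NN}$ yields a CLWE distinguisher with advantage $1/\poly(d)$ by the held-out-loss test described in Step~1, and Theorem~\ref{thm:clwe-hardness} then produces the promised polynomial-time quantum GapSVP algorithm.
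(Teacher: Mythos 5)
Your overall strategy is the same as the paper's: Gaussianize CLWE into a regression problem for a Lipschitz periodic neuron, exactly realize that neuron by a shallow ReLU network on a bounded interval, and then use Gaussian tail bounds to transfer the weak learner across the two problems before invoking Theorem~\ref{thm:clwe-hardness}. The construction in your Step~2 (piecewise-linear sawtooth glued from $O(\gamma M)$ shifted ReLUs, clipped outside $[-M,M]$ with $M = C\sqrt{\log d}$) is exactly the paper's Lemma~\ref{lem:phi-nn-eq}.

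There is one genuine gap in your Step~2 argument. You argue that the small \emph{$L^2$ bias} $\EE_x[(f_w(x)-\tilde f_w(x))^2] \le d^{-\Omega(1)}$ means ``any $\varepsilon$-weak learner for $\mathcal{F}_k^{\NN}$ \dots is also a $(\varepsilon-d^{-\Omega(1)})$-weak learner for the periodic target $f_w$.'' But the learner is fed samples from $D_{f_w}$ (what the CLWE transformation produces), not from $D_{\tilde f_w}$; its weak-learning guarantee applies only to the latter. An $L^2$ bias bound alone does not let you transfer the guarantee, and indeed the paper's proof sketch explicitly warns that generic $L^2$-type approximation results need not interact well with the Gaussianization step. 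What makes your construction work is something slightly stronger: since $\phi$ and $\nn$ agree exactly on $[-M,M]$ and are both bounded by $1/4$, the per-sample $\TV(D_{f_w}, D_{\tilde f_w})$ is $O(\exp(-M^2/2)/\sigma)$ (conditioning on $x$ and comparing two Gaussians of variance $\sigma^2$, as in Lemma~\ref{lem:phi-nn-tv-close}); multiplying by $m=\poly(d)$ and taking $M=\omega(\sqrt{\log d})$ large enough makes the $m$-sample distributions $\TV$-close, which combined with your population-loss bound recovers the transfer (this is Lemma~\ref{lem:phi-to-nn}). You need to spell out this $\TV$ step; without it, Step~2 as written does not go through. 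A smaller remark in Step~1: the dependence ``taking $\sigma$ to be a slight inflation of $\beta$'' runs in the wrong direction -- $\sigma$ is given, and the reduction must choose $\beta$ sufficiently smaller than $\sigma$, $\varepsilon$, and $1/m_1$ (cf.\ the constraint $\beta \le \min\{\sigma^2/(10^4 m_1^2), \varepsilon^2/10^3\}$ in Theorem~\ref{thm:CLWE-to-phi}); this is possible because Theorem~\ref{thm:clwe-hardness} allows any $\beta = 1/\poly(d)$.
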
Notice that directly from our Theorem~\ref{thm:mainTheorem_0} and the widely believed Conjecture~\ref{conj:svp-poly-factor} we can conclude that no polynomial-time estimator can weakly learn the class of one hidden layer neural networks under arbitrary polynomially small Gaussian noise.

\subsection{Proof Sketch and Comparison with \cite{SZB21-cosine-learning}}
Our (simple) proof is an appropriate combination of two key steps. We first establish in Section~\ref{sec:clwe_lip} that solving the CLWE problem reduces to learning Lipschitz periodic neurons under polynomially small \emph{Gaussian} noise (see Theorem~\ref{thm:CLWE-to-phi}). This is a direct improvement upon the key result \cite[Theorem 3.3]{SZB21-cosine-learning} that establishes that CLWE reduces to learning Lipschitz periodic neurons under polynomially small \emph{adversarial} noise. Our approach is to perhaps interestingly show that one can ``Gaussianize'' the adversarial noise in the labels generated via the reduction followed by \cite{SZB21-cosine-learning} by simply injecting additional Gaussian noise of appropriate variance to them (see Lemma~\ref{lem:noisy-tv-close}).

Recall that, using standard approximation results \cite[Appendix E]{SZB21-cosine-learning}, learning 1-Lipschitz periodic neurons under polynomially small adversarial noise is equivalent with learning one hidden layer neural networks of appropriate polynomial width under (slightly larger) polynomially small adversarial noise. Unfortunately, we cannot straightforwardly generalize this logic to Gaussian errors using our first step, because the induced approximation error can in principle be too large for our ``Gaussianization'' lemma to work. Regardless, instead of using approximation results, in Section~\ref{sec:clwe_1hlnn}, we follow a more direct route and prove that for any arbitrary large bounded interval $[-R,R]$ one can explicitly construct an appropriate Lipschitz periodic neuron and a polynomial-width neural network that exactly agree on $[-R,R]$, i.e., have zero ``approximation'' error (see Lemma~\ref{lem:phi-nn-eq}). This lemma combined with our first step Theorem~\ref{thm:CLWE-to-phi} allow us to reduce CLWE to learning one hidden layer neural networks under Gaussian noise. Combining the above with the reduction from GapSVP to CLWE (Theorem~\ref{thm:clwe-hardness}) let us then conclude Theorem~\ref{thm:mainTheorem_0}.

\section{CLWE reduction to Lipschitz Periodic Neurons under Gaussian noise}\label{sec:clwe_lip}



We first recall the notion of Lipschitz periodic neurons from \cite{SZB21-cosine-learning}. Let $\gamma = \gamma(d) > 1$ be a sequence of numbers indexed by the input dimension $d \in \NN$, and let $\phi: \RR \rightarrow [-1,1]$ be a \(1\)-Lipschitz and 1-periodic function. We denote by $\sF_\gamma^\phi$ the function class
\begin{align}
\label{periodic-neurons}
    \sF_\gamma^\phi = \{f: \RR^d \rightarrow [-1,1] \mid f(x) = \phi(\gamma \langle w, x \rangle), w \in S^{d-1}\}
\end{align}
Note that each member of the function class $\sF_\gamma^\phi$ is fully characterized by a unit vector $w \in S^{d-1}$. We refer such function classes as \emph{Lipschitz periodic neurons}.

\cite{SZB21-cosine-learning} has established that solving CLWE reduces to learning in polynomial-time the class $\sF_\gamma^\phi$ under polynomially small adversarial noise. Their reduction is very simple; given a CLWE $(x,\gamma \langle w, x \rangle+\xi \mod 1)$  one can create a sample $(x,\phi(\gamma \langle w, x \rangle+\xi)$ by applying $\phi$ since $\phi$ is 1-periodic. But since $\phi$ is $1$-Lipschitz and $\xi$ is ``small'', notice $\phi(\gamma \langle w, x \rangle+\xi) = \phi(\gamma \langle w, x \rangle)+\xi'$ for some $\xi'$ also ``small'' as $|\xi'| \leq |\xi|.$ Hence, the authors of \cite{SZB21-cosine-learning} construct from a CLWE sample, a sample from the Lipschitz periodic neuron class $\sF_\gamma^\phi$, but under the somewhat cumbersome noise variable $\xi'$ which we can only control its magnitude -- for this reason $\xi'$ is referred to as small adversarial noise in \cite{SZB21-cosine-learning}.

Our first step is to improve upon \cite{SZB21-cosine-learning} and construct instead a sample from the Lipschitz periodic neuron class $\sF_\gamma^\phi$, but under simply Gaussian noise $\xi'$. Our idea to do so is to simply inject additional small Gaussian noise to $\phi(\gamma \langle w, x \rangle+\xi).$ We prove that as long as the variance of the added noise is of slightly larger magnitude than the magnitude of the (already polynomially small) noise $\xi$, in total variation distance the sample approximately equals in distribution to $\phi(\gamma \langle w, x \rangle)+\xi'$ where now $\xi'$ is Gaussian.  This result is described in the following lemma.
\begin{lemma}
\label{lem:noisy-tv-close}
Let \(\phi:\R\to \R\) be an \(1\)-Lipschitz function. For fixed \(\gamma>0\) and \(w\in S^{d-1}\), and \(x\sim N(0, I_d)\), \(\xi_0 \sim N(0,\beta)\), \(\xi \sim N(0, \sigma^2)\), the total variation distance between the distributions of \((x, \phi(\gamma \langle w, x \rangle+\xi_0)+\xi)\) and \((x, \phi(\gamma \langle w, x \rangle)+\xi)\) is at most \(\frac{\sqrt{\beta}}{\sqrt{2\pi}\sigma}\).
\end{lemma}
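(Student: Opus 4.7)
The plan is to reduce the total variation bound to a one-dimensional conditional statement. Since the two joint distributions share the same marginal on $x$ (namely $N(0,I_d)$), the chain rule for total variation gives
\[
\TV\bigl((x, Y_1),\,(x, Y_2)\bigr) \;=\; \E_{x}\bigl[\TV\bigl(Y_1 \mid x,\; Y_2 \mid x\bigr)\bigr],
\]
where $Y_1 = \phi(a+\xi_0)+\xi$ and $Y_2 = \phi(a)+\xi$ with $a := \gamma\langle w,x\rangle$ fixed by the conditioning. It therefore suffices to prove the bound on the conditional TV for each fixed $a\in\R$, and then take expectation over $x$.

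For fixed $a$, the law of $Y_1$ is a mixture (over $\xi_0\sim N(0,\beta)$) of Gaussians $N(\phi(a+\xi_0),\sigma^2)$, while the law of $Y_2$ is the single Gaussian $N(\phi(a),\sigma^2)$. Using convexity of total variation applied to this mixture representation, I would bound
\[
\TV(Y_1, Y_2) \;\le\; \E_{\xi_0}\!\left[\TV\!\left(N(\phi(a+\xi_0),\sigma^2),\; N(\phi(a),\sigma^2)\right)\right].
\]
For two Gaussians with identical variance, Pinsker's inequality combined with the closed form $\KL(N(\mu_1,\sigma^2)\,\Vert\,N(\mu_2,\sigma^2)) = (\mu_1-\mu_2)^2/(2\sigma^2)$ yields $\TV \le |\mu_1-\mu_2|/(2\sigma)$. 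The $1$-Lipschitz property of $\phi$ then gives $|\phi(a+\xi_0)-\phi(a)|\le |\xi_0|$, so
\[
\TV(Y_1,Y_2) \;\le\; \frac{\E|\xi_0|}{2\sigma} \;=\; \frac{\sqrt{2\beta/\pi}}{2\sigma} \;=\; \frac{\sqrt{\beta}}{\sigma\sqrt{2\pi}},
\]
using $\E|\xi_0|=\sqrt{2\beta/\pi}$ for $\xi_0\sim N(0,\beta)$. This matches the claimed bound and, being uniform in $a$ (hence in $x$), lifts directly to the joint TV via the first display.

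The main obstacle, to the extent there is one, is simply matching the constant $1/\sqrt{2\pi}$ exactly: the Pinsker step turns out to be tight enough, and one could alternatively invoke the exact Gaussian formula $\TV(N(\mu_1,\sigma^2),N(\mu_2,\sigma^2))=\mathrm{erf}(|\mu_1-\mu_2|/(2\sqrt{2}\sigma))$ together with $\mathrm{erf}(t)\le 2t/\sqrt{\pi}$, which gives the same inequality. Beyond that, the argument is essentially mechanical: the $1$-Lipschitzness of $\phi$ converts the perturbation $\xi_0$ into an equally small mean shift, and the extra Gaussian smoothing by $\xi$ is precisely what makes this shift indistinguishable in TV on the stated scale.
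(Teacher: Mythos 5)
Your proof is correct and follows essentially the same route as the paper: condition on $x$, write $Y_1$ as a mixture over $\xi_0$ and use convexity of total variation, then apply Pinsker with the Gaussian KL formula and the $1$-Lipschitz property of $\phi$, and finally use $\E|\xi_0|=\sqrt{2\beta/\pi}$. The paper labels the convexity step as a ``triangle inequality,'' but the content is identical; your observation that the $\mathrm{erf}$ formula gives the same constant is a nice aside but not needed.
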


\begin{proof}
Since the first entries of the two pairs are the same, it suffices to upper bound the total variance distance between the distributions of \(z_1=\phi(\gamma \langle w, x \rangle+\xi_0)+\xi\) and \(z_2=\phi(\gamma \langle w, x \rangle)+\xi\) conditioning on \(x\). Note that conditioning on \(\xi_0\) and $x$, the distribution of \(z_1\) is \(N(\phi(\gamma \langle w, x \rangle+\xi_0), \sigma^2)\) and the distribution of \(z_2\) is \(N(\phi(\gamma \langle w, x \rangle), \sigma^2)\). Thus,
\begin{align*}
&\TV(z_1|x,z_2|x) \le \E_{\xi_0\sim N(0,\beta)}[\TV(z_1|(\xi_0, x),z_2|(\xi_0, x))]
\le \E_{\xi_0\sim N(0,\beta)} \left[ \sqrt{\frac{\KL(z_1|(\xi_0, x) \,\|\, z_2|(\xi_0, x))}{2}} \right] \\
& \qquad  = \E_{\xi_0\sim N(0,\beta)} \left[ \frac{|\phi(\gamma \langle w, x \rangle + \xi_0) - \phi(\gamma \langle w, x \rangle)|}{2\sigma} \right] 
\le \E_{\xi_0\sim N(0,\beta)} \left[ \frac{|\xi_0|}{2\sigma} \right] =  \frac{\sqrt{\beta}}{\sqrt{2\pi}\sigma},
\end{align*} 
where the first inequality is from the triangle inequality, the second inequality is from Pinsker's inequality, the equality in the third line is from the KL divergence between two single dimensional Guassians, and the last inequality is from the \(1\)-Lipschitz continuity of \(\phi\). 
\end{proof}

 Lemma~\ref{lem:noisy-tv-close} allow us to establish the following key CLWE hardness result for Lipschitz periodic neurons.

\begin{theorem}
\label{thm:CLWE-to-phi}
Let \(d\in \N\) , \(\gamma = \omega(\sqrt{\log d})\), \(\sigma \in (0,1)\), \(\varepsilon = \poly(d)^{-1}\), \(m_1 = \poly(d)\). Moreover, let \(\phi:\R\to [-1,1]\) be an \(1\)-Lipschitz, \(1\)-periodic function. Then, a polynomial-time learning algorithm using \(m_1\) samples that \(\varepsilon \)-weakly learns the function class $\mathcal{F}_{\gamma}^\phi$ over Gaussian inputs \(x \overset{\iid}{\sim} N(0,I_d)\) and under Gaussian label noise \(\xi\overset{\iid}{\sim} N(0,\sigma^2)\) implies a polynomial-time algorithm for \(\CLWE_{\beta,\gamma}\) for any \(\beta \le \min \left\{ \frac{\sigma^2}{10^4 m_1^2}, \frac{\varepsilon ^2}{10^3} \right\}\).
\end{theorem}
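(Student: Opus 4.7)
My plan is a reduction from $\CLWE_{\beta,\gamma}$ to $\varepsilon$-weakly learning $\mathcal{F}_\gamma^\phi$ under Gaussian label noise, following the template of \cite{SZB21-cosine-learning} but now routing the analysis through Lemma~\ref{lem:noisy-tv-close} to upgrade the noise model from adversarial to Gaussian. Given $m_1+m_2=\poly(d)$ samples $(x_i,z_i)$ drawn either from $A_{\beta,\gamma}$ or from $A_0$, the distinguisher forms labels $\tilde y_i:=\phi(z_i)+\xi_i$ with fresh $\xi_i \iiddistr N(0,\sigma^2)$, feeds the first $m_1$ transformed samples to the assumed weak learner to obtain a hypothesis $h$, estimates its empirical squared loss on the remaining $m_2$ samples, and outputs ``$A_{\beta,\gamma}$'' iff this estimate lies below a calibrated threshold.

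For the CLWE case, $1$-periodicity of $\phi$ turns the preprocessed sample into $(x_i,\phi(\gamma\langle w,x_i\rangle+\xi_{0,i})+\xi_i)$, so Lemma~\ref{lem:noisy-tv-close} together with $\beta\le\sigma^2/(10^4 m_1^2)$ and subadditivity of TV keeps the joint distribution of the $m_1$ training samples within a constant TV distance of samples from the ``ideal'' distribution $D_f$ with $f(\cdot)=\phi(\gamma\langle w,\cdot\rangle)\in\mathcal{F}_\gamma^\phi$. Coupling then passes the learner's guarantee through and yields a hypothesis $h$ with $L_{D_f}(h)\le V_Q-\varepsilon$ with constant probability, where $V_Q:=\mathrm{Var}_x(\phi(\gamma\langle w,x\rangle))+\sigma^2$. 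To pass from $L_{D_f}(h)$ to the population loss on the actual transformed-CLWE distribution $P$ (the one the held-out samples come from), I would use a same-$(x,\xi)$ coupling of $P$ and $D_f$ and Cauchy--Schwarz with $1$-Lipschitzness of $\phi$ to bound $|L_P(h)-L_{D_f}(h)|=O(\sqrt{\beta})$, which is $\le\varepsilon/8$ under the second hypothesis $\beta\le\varepsilon^2/10^3$. For the null case, $\tilde y_i$ is independent of $x_i$ with variance $V_0:=\mathrm{Var}_{U\sim\mathrm{Unif}([-1/2,1/2))}(\phi(U))+\sigma^2$, so \emph{every} hypothesis has loss at least $V_0$. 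Crucially $|V_Q-V_0|=o(\varepsilon)$, because $\gamma\langle w,x\rangle\sim N(0,\gamma^2)$ with $\gamma=\omega(\sqrt{\log d})$ and hence (by Poisson summation on the Gaussian density) $N(0,\gamma^2)\bmod 1$ lies within TV distance $e^{-\Omega(\gamma^2)}=d^{-\omega(1)}$ of $\mathrm{Unif}([-1/2,1/2))$, and boundedness of $\phi$ transfers this to the variances.

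Finally, to estimate $L_P(h)$ from $m_2$ held-out samples, I would clip $h$ to $[-1,1]$ (which cannot increase loss since $f\in[-1,1]$), truncate the Gaussian noise at $O(\sigma\sqrt{\log d})$ (costing $d^{-\omega(1)}$ in expectation), and apply Hoeffding; $m_2=\poly(d/\varepsilon)$ samples then give an estimate within $\varepsilon/8$ of $L_P(h)$, so thresholding midway between $V_0-\varepsilon$ and $V_0$ yields non-negligible advantage. \emph{The main obstacle} is synchronizing the three error budgets---TV cost of Gaussianization, the $P$-vs-$D_f$ coupling error, and empirical loss deviation---so that each is $o(\varepsilon)$; the two parts of $\beta\le\min\{\sigma^2/(10^4 m_1^2),\varepsilon^2/10^3\}$ are calibrated precisely for the first and second of these, while the last is absorbed into the choice of $m_2$. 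A minor subtlety is that $h$ depends on the training batch, but independence of the held-out batch lets Hoeffding apply conditionally on $h$.
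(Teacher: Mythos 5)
Your proposal is correct and follows essentially the same reduction as the paper: transform each CLWE/null sample by applying the periodic $\phi$ and adding fresh $N(0,\sigma^2)$ noise, invoke Lemma~\ref{lem:noisy-tv-close} (with data-processing, using $\beta \le \sigma^2/(10^4 m_1^2)$) to pass the learner's guarantee to the transformed CLWE samples, control the $P$-vs-$P_\phi$ population-loss gap by $O(\sqrt{\beta}) \le \varepsilon/8$ via Lipschitzness and $\beta \le \varepsilon^2/10^3$, use the Gaussian-mod-1-to-uniform TV bound (the paper's \cite[Claim I.6]{SZB21-cosine-learning}) to show the variance baselines under CLWE and null coincide up to $d^{-\omega(1)}$, concentrate the held-out empirical loss, and threshold. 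The only procedural difference is cosmetic: you calibrate against the explicit known constant $V_0 = \mathrm{Var}_U(\phi(U)) + \sigma^2$, whereas the paper compares the held-out empirical loss against an empirical loss computed on freshly self-generated null samples, which avoids having to explicitly compute (or pre-estimate) $\mathrm{Var}_U(\phi(U))$; both implement the identical inequality chain.
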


We defer the proof of the theorem to Section~\ref{sec:pf_CLWE}. Notice that a direct corollary of Theorem~\ref{thm:CLWE-to-phi} is that a weak learning algorithm for the class $\mathcal{F}_{\gamma}^{\phi}$, implies a quantum algorithm for approximating GapSVP withing polynomial factors.

\begin{corollary}
\label{cor:phi-solves-SVP}
Let \(d \in \N\), \(\gamma = \poly(d)\) with \(\gamma\ge 2\sqrt{d}\), \(\sigma = \poly(d)^{-1}\), \(\varepsilon = \poly(d)^{-1}\). Moreover, let \(\phi:\R\to [-1,1]\) be an \(1\)-Lipschitz \(1\)-periodic function. Then, a polynomial-time algorithm that \(\varepsilon\)-weakly learns the function class \(\mathcal{F}_{\gamma}^\phi = \{f_{\gamma,w}(x) = \phi(\gamma \langle w, x \rangle) \mid w\in S^{d-1}\}\) over Gaussian inputs \(x \overset{\iid}{\sim} N(0,I_d)\) under Gaussian noise \(\xi\overset{\iid}{\sim} N(0,\sigma^2)\) implies a polynomial-time quantum algorithm that approximates $\mathrm{GapSVP}$ to within polynomial factors. 
\end{corollary}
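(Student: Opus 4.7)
The plan is to chain the two reductions already stated in the paper: first invoke Theorem~\ref{thm:CLWE-to-phi} to turn the hypothetical weak learner for \(\mathcal{F}_\gamma^\phi\) into a polynomial-time algorithm for \(\CLWE_{\beta,\gamma}\) at an appropriate noise level \(\beta\), and then invoke Theorem~\ref{thm:clwe-hardness} to convert that CLWE solver into a polynomial-time quantum algorithm for approximating \(\mathrm{GapSVP}\) within \(\poly(d)\) factors. Essentially no new mathematical work is needed; the task is to verify that the parameter regimes line up.

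Concretely, suppose \(\sA\) is a polynomial-time \(\varepsilon\)-weak learner for \(\mathcal{F}_\gamma^\phi\) using \(m_1 = \poly(d)\) samples, with \(\sigma, \varepsilon = 1/\poly(d)\). Since \(\gamma \ge 2\sqrt{d} = \omega(\sqrt{\log d})\), the frequency condition of Theorem~\ref{thm:CLWE-to-phi} is met. I will choose
\[
\beta \;=\; \min\!\left\{ \frac{\sigma^2}{10^4 m_1^2},\; \frac{\varepsilon^2}{10^3} \right\},
\]
which, as the minimum of two \(1/\poly(d)\) quantities, is itself \(1/\poly(d)\). Applying Theorem~\ref{thm:CLWE-to-phi} with this \(\beta\) yields a polynomial-time algorithm for \(\CLWE_{\beta,\gamma}\) with non-negligible advantage.

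It then remains to check the hypotheses of Theorem~\ref{thm:clwe-hardness}: we need \(\beta = 1/\poly(d)\) (just verified) and \(2\sqrt{d} \le \gamma = \poly(d)\), both of which are assumed. Hence Theorem~\ref{thm:clwe-hardness} produces a polynomial-time quantum algorithm that solves \(\mathrm{GapSVP}\) within \(\poly(d)\) factors, completing the chain.

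There is no genuine obstacle in this argument, only bookkeeping: the only thing one must be slightly careful about is that the sample complexity \(m_1\) of the weak learner, which enters the bound on \(\beta\) through \(\sigma^2/(10^4 m_1^2)\), is still polynomial, so that \(\beta\) remains inverse-polynomial in \(d\) and \(\log(1/\beta)\) stays polynomial as required by Theorem~\ref{thm:clwe-hardness}. Once this is observed, the corollary follows immediately by composing the two reductions.
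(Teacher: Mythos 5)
Your proposal is correct and matches the paper's own proof essentially verbatim: choose $\beta = \min\{\sigma^2/(10^4 m_1^2),\ \varepsilon^2/10^3\} = 1/\poly(d)$, apply Theorem~\ref{thm:CLWE-to-phi} to get a $\CLWE_{\beta,\gamma}$ solver, then apply Theorem~\ref{thm:clwe-hardness} with $\beta = 1/\poly(d)$ and $2\sqrt{d}\le\gamma=\poly(d)$. No substantive differences.
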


\begin{proof}
Since the weak learning algorithm runs in polynomial time, the number of samples it uses is \(m_1 = \poly(d)\). Let \(\beta = \min \left\{ \frac{\sigma^2}{10^4 m_1^2}, \frac{\varepsilon ^2}{10^3} \right\} = \poly(d)^{-1}\). By Theorem~\ref{thm:CLWE-to-phi}, there is a polynomial-time algorithm for \(\CLWE_{\beta,\gamma}\). Moreover, since \(\beta = \poly(d)^{-1}\) and \(2\sqrt{d} \le \gamma = \poly(d)\), from Theorem~\ref{thm:clwe-hardness}, there is a polynomial-time quantum algorithm that solves \(\mathrm{GapSVP}\) within \(\poly(d)\) factors.
\end{proof}



\subsection{Proof of Theorem~\ref{thm:CLWE-to-phi}}\label{sec:pf_CLWE}
\begin{proof}
We begin with introducing some definitions and notation about $\clwe$ and about weak learners for the class $\mathcal{F}_{\gamma}^{\phi}$ that will be useful to us during the proof.

\begin{description}[leftmargin=12pt]
    \item[CLWE] We denote by \(P_0\) the CLWE distribution, i.e., samples \((x_i,z_i = \gamma \left\langle w,x_i \right\rangle + \xi_{0,i}\bmod{1})\) where \(w\sim U(S^{d-1})\), \(x_i \sim N(0,I_d)\), and \(\xi_{0,i}\sim N(0,\beta)\). Let \(Q_0\) denote the null distribution, i.e., samples \((x_i,y_i)\) where \(x_i \sim N(0,I_d)\), and \(y_i\sim U[0,1]\). 
    For some appropriately large $m_2=\poly(d)$ that will be determined in the proof, we study whether a polynomial-time algorithm for \(\CLWE_{\beta,\gamma}\) can distinguish between \(m\) i.i.d.\ samples from \(P_0\) and \(m\) i.i.d.\ samples from \(Q_0\), for \(m = m_1+m_2=\poly(d)\), with non-negligible advantage. 
    \smallskip

    For a labeled sample \((x,y) \in \R^d\times \R\), we define \begin{align}F_\xi(x,y) = (x,\phi(y)+\xi). \label{eq:F_xi}
    \end{align}Let \(P_1\) and \(Q_1\) denote the distributions of \(P_0\) and \(Q_0\) after applying \(F_{\xi}\) for \(\xi \sim N(0,\sigma^2)\). 
    \medskip

    \item[Weak Learning $\mathcal{F}^{\phi}_\gamma$] Let \(P_\phi\) denote the distribution of the samples \((x,\phi(\gamma \langle w, x \rangle)+\xi)\). For \(\varepsilon = \poly(d)^{-1}\) a weak learner for $P_{\phi}$ is a polynomial time learning algorithm \(\mathcal{A}\) that take as input \(m_1\) samples from \(P_{\phi}\) and with probability \(2/3\) outputs a hypothesis \(h':\R\to\R\) such that \(L_{P_\phi}(h')\le L_{P_\phi}(\E[f_{\gamma,w}(x)])- \varepsilon \). Since we are using the squared loss, \(\tilde h(x) = \sgn(h'(x))\cdot \min(|h'(x)|,1)\) is always no worse than \(h'(x)\), since \(\phi(x)\in [-1,1]\) and the noise on the label is unbiased. Thus, we can assume without loss of generality that \(h'(x) \in [-1,1]\).
\end{description}

Our goal in this proof is to assume access to a weak learner for the function class $\mathcal{F}_{\gamma}^{\phi}$ with $m_1$ samples, and design an algorithm for solving the $\clwe$ problem (as defined above) with $m=m_1+m_2$ for an appropriate number of additional samples $m_2=\poly(d)$. More precisely, we want to design an efficient algorithm $\mathcal{B}$ to distinguish between a set of samples from $P_0$ and a set of samples from $Q_0$ with $m=m_1+m_2, m_2=\poly(d)$ samples, using a weak learning algorithm $\mathcal{A}$ for $\mathcal{F}_{\gamma}^{\phi}$ with $m_1$ samples.
\medskip

\noindent \textit{Definition of Algorithm $\mathcal{B}$.}
For a sufficiently large $m_2=\poly(d)$ for the purposes of the proof the follows, we are given \(m=m_1+m_2=\poly(d)\) i.i.d.\ samples $\{(x_i, z_i)\}_{i = 1}^{m}$ from an unknown distribution \(D\), which is either \(P_0\) or \(Q_0\), algorithm $\mathcal{B}$ follows the following steps.
\begin{enumerate}
  \item Sample \(\xi_i \overset{\iid}{\sim} N(0,\sigma^2)\), \(i=1,2,\dots, m\).
  
  \item For each \(i=1,2,\dots, m\), apply \(F_{\xi_i}\), defined in \eqref{eq:F_xi}, to $(x_i, z_i)$ to get a sample $(x_i, s_i)$ from \(D_1\), which is either \(P_1\) or \(Q_1\).
  
  \item Run \(\mathcal A\) on the first \(m_1\) of the samples from $D_1$, and let \(h:\R\to [-1,1]\) be the hypothesis that $\mathcal{A}$ outputs.
  
  \item Generate $m_2$ samples from $Q_1$.
  
  \item Compute the empirical loss $\hat L_{D_1}(h)$ of \(h\) on the remaining \(m_2\) samples from $D_1$, and the empirical loss $L_{Q_1}(h)$ on the \(m_2\) samples generated from \(Q_1\).
  \item Test whether \(\hat L_{D_1}(h) \le \hat L_{Q_1}(h) - \varepsilon /5\) or not. \label{algstep6}
  \item In the end, conclude \(D=P_0\) if \(h\) passes the test in step (6) and \(D=Q_0\) otherwise.
\end{enumerate}

\noindent \textit{Proof of Correctness of $\mathcal{B}$.} Next we prove the correctness of this algorithm $\mathcal{B}$ assuming the correctness of $\mathcal{A}$ and using Lemma~\ref{lem:noisy-tv-close}. We first show that if $D=P_0$ then $h$ will pass the test \(\hat L_{D_1}(h) \le \hat L_{Q_1}(h) - \varepsilon /5\) and then we show that if $D = Q_0$ then $h$ will fail this test.
\medskip

\noindent \textbf{Case I: \(\mathbf{D=P_0}\).} Recall that \(h\) and \(h'\) denote the output of \(\mathcal A\) given \(m_1\) samples from \(D_1=P_1\) and \(P_\phi\) respectively, employing the notation we introduced above. By the data processing inequality, $TV(h,h') \le \TV\bigl(P_1^{\otimes m_1},P_\phi^{\otimes m_1}\bigr) \le m_1\cdot \TV(P_1,P_\phi),$ where $\TV(h, h')$ refers to the total variation between the distribution of $h$, and the distribution of $h'$. Hence, by Lemma~\ref{lem:noisy-tv-close}, $\TV(h, h')$ is upper bounded by \(\frac{m_1\sqrt{\beta}}{\sqrt{2\pi}\sigma}\). Then, 
\begin{align*}
    &\left| \P_{h\gets \mathcal{A}(P_1^{\otimes m_1})}[L_{P_\phi}(h) \le L_{P_\phi}(\E[f_{\gamma,w}(x)]) - \varepsilon ] - \P_{h'\gets \mathcal{A}(P_\phi^{\otimes m_1})}[L_{P_\phi}(h) \le L_{P_\phi}(\E[f_{\gamma,w}(x)]) - \varepsilon ]\right| \\
   & \qquad \qquad \le 2\TV(h,h') \le \frac{m_1 \sqrt{2\beta}}{\sqrt{\pi}\sigma} \le 0.01,
\end{align*}
since we have chosen \(\beta \le \frac{\sigma^2}{10^4 m_1^2}\). 
Thus, we have with probability at least \(2/3 - 0.01\) that 
\begin{align}
L_{P_\phi}(h)\le L_{P_\phi}(\E[f_{\gamma,w}(x)])- \varepsilon.\label{eq:phiProofPhilossH}
\end{align}

Note that \(L_{P_\phi}(h) = \E_{(x,z)\sim P_\phi}(h(x)-z)^2 = \E_{P_\phi}(h(x)-f_{\gamma,w}(x))^2 + \E \xi^2\), and similarly, \(L_{P_1}(h) = \E_{P_1}(h(x)-\phi(\gamma \langle w, x \rangle + \xi_0))^2 + \E\xi^2\). Hence, 
\begin{align}
    \left| L_{P_\phi}(h)-L_{P_1}(h) \right| &= \left|\E_{P_\phi}(h(x)-f_{\gamma,w}(x))^2 - \E_{P_1}(h(x)-\phi(\gamma \langle w, x \rangle + \xi_0))^2 \right| \notag \\
    &= \left| \E_{x,\xi_0} \bigl(\phi(\gamma \langle w, x \rangle) - \phi(\gamma \langle w, x \rangle+\xi_0)\bigr)\bigl(\phi(\gamma \langle w, x \rangle) + \phi(\gamma \langle w, x \rangle+\xi_0)-2h(x)\bigr) \right| \notag \\
    &\le 4\E_{x,\xi_0}|\phi(\gamma \langle w, x \rangle) - \phi(\gamma \langle w, x \rangle+\xi_0) | \le 4\sqrt{2\beta/\pi} \le \varepsilon/5, \label{eq:phiProofPphiP1}
\end{align}
where the last inequality is because we have chosen \(\beta \le \frac{\varepsilon ^2}{10^3 }\). 
Let \(c=\E_{y\sim U[0,1]}[\phi(y)]\) then, canceling \(\E\xi^2\) similarly, 
\begin{align}
    \left| L_{P_\phi}(c) - L_{Q_1}(c) \right| &= \left| \E_{P_\phi}(c-\phi(\gamma \langle w, x \rangle))^2 - \E_{Q_1}(c-\phi(y))^2 \right| \notag \\
    &= \left| \E_{y\sim P_y}(c-\phi(y))^2 - \E_{y\sim U[0,1]}(c-\phi(y))^2 \right| \notag \\
    &\le 2 \left\| (c-\phi(y))^2 \right\|_\infty\cdot \TV(P_y,U[0,1]) \notag \\
    &\le 16 \exp(-2\pi^2 \gamma^2) \le o(\poly(d)^{-1}) \le \varepsilon /5, \label{eq:phiProofPphiQ1}
\end{align}
where \(P_y\) denotes the distribution of \((\gamma \langle w, x \rangle \bmod 1)\) for \(x\sim N(0, I_d)\), the second inequality is from \cite[Claim I.6]{SZB21-cosine-learning}, and the last inequality is because \(\varepsilon = \poly(d)^{-1}\). Combining \eqref{eq:phiProofPhilossH}, \eqref{eq:phiProofPphiP1}, and \eqref{eq:phiProofPphiQ1} we get that with probability at least \(2/3-0.01\), 
\begin{align}
    L_{P_1}(h)& \le L_{P_\phi}(h) + \varepsilon /5 \le L_{P_\phi}(\E[f_{\gamma,w}(x)])-4 \varepsilon /5 \notag \\
    &\le  L_{P_\phi}(c)-4 \varepsilon /5 \le L_{Q_1}(c) - 3\varepsilon /5 \le L_{Q_1}(h) - 3\varepsilon /5, \label{eq:phiProofP1Q1}
\end{align}
where the third inequality is from the optimality of \(\E[f_{\gamma,w}(x)]\) among constant predictors for \(P_\phi\), and the last inequality is from the optimality of \(c\) among all predictors for \(Q_1\). 

Using the remaining \(m_2\) samples \((x_i,z_i)\) from \(P_0\), and the newly generated \(m_2\) samples \( (x_i',y_i')\) from \(Q_0\), \(i=m_1+1,\dots,m\), compute the empirical losses \(\hat L_{P_1}(h) = \frac{1}{m_2}\sum_{i=m_1+1}^{m} \ell\bigl(h(x_i),F_{\xi_i}(z_i)\bigr) \) and \(\hat L_{Q_1}(h) = \frac{1}{m_2}\sum_{i=m_1+1}^{m} \ell\bigl(h(x_i'),F_{\xi_i}(y_i')\bigr)\). We know that \(|h(x) - \phi(y)| \le 2\), and \(\xi\) is Gaussian with variance \(\sigma^2 < 1\). Hence \(h(x) - F_{\xi}(y)\) is sub-Gaussian with some absolute constant parameter. Therefore \(\ell(h(x),F_{\xi}(y))\) is sub-exponential with some absolute constant parameters. Then by Bernstein's inequality, \(| \hat L_{D_1}(h) - L_{P_1}(h) | \le \varepsilon /5\) and \(| \hat L_{Q_1}(h) - L_{Q_1}(h) | \le \varepsilon /5\), both with probability at least \(1-\exp(-\min\{\Omega(m_2\varepsilon ^2), \Omega(m_2 \varepsilon )\})\) which is \(1-o(1)\), as long as \(m_2 = \omega(\varepsilon ^{-2})\), which can be \(\poly(d)\). Using these concentration bounds together with \eqref{eq:phiProofP1Q1} we get that with probability at least \(2/3-0.01-o(1)\), it holds that 
\[\hat L_{D_1}(h) \le L_{P_1}(h) + \varepsilon /5 \le L_{Q_1}(h) - 2 \varepsilon /5 \le \hat L_{Q_1}(h) - \varepsilon /5 \]
hence $h$ will pass the test of the step~\ref{algstep6} of algorithm $\mathcal{B}$ and $\mathcal{B}$ will return the correct answer.
\medskip

\noindent \textbf{Case II: $\mathbf{D = Q_0}$.} In this case $D_1 = Q_1$. Using $m_2 = \omega(\varepsilon^{-2})$ large enough and applying Bernstein's inequality again we get that \(| \hat L_{D_1}(h) - L_{Q_1}(h) | \le \varepsilon /20\) and \(| \hat L_{Q_1}(h) - L_{Q_1}(h) | \le \varepsilon /20\), both with probability at least \(1-\exp(-\Omega(m_2\varepsilon ^2))=1-o(1)\). Which means that \(| \hat L_{D_1}(h) - \hat L_{Q_1}(h) | \le \varepsilon /10\) with probability at least \(1-o(1)\). Hence, \(\hat L_{D_1}(h) > \hat L_{Q_1}(h) - \varepsilon /5\) and the test in step~\ref{algstep6} of the algorithm $\mathcal{B}$ fails.
\medskip

In both cases, the test correctly concludes \(D=P_0\) or \(D=Q_0\), by using the empirical loss \(\hat L_{D_1} (h)\) and comparing it to the value \(\hat L_{Q_1}(h) - \varepsilon /5\). 
\end{proof}

\section{The Cryptographic Hardness of Learning One Hidden Layer Neural Networks} \label{sec:clwe_1hlnn}


In this section we complete the proof of Theorem~\ref{thm:mainTheorem_0}. To do this we construct a family of one hidden layer neural networks with polynomial size that is 1-Lipschitz and 1-periodic over a finite range $[-R, R]$. Because our input $x$ is Gaussian, and hence has ``light'' tails, we show that this is enough to apply our Theorem~\ref{thm:CLWE-to-phi} from the previous section and conclude our hardness result.
\medskip

We begin by reminding the reader the class $\mathcal{F}_k^\NN$  of one hidden layer neural networks defined in \eqref{eq:1nn}. Let us consider the following function, \[\phi(x) = \left| x-\frac{3}{4}- \left\lfloor x-\frac{1}{4} \right\rfloor \right|-\frac{1}{4}= \begin{cases}
    x - k, & x\in [k-1/4,k+1/4], \\
    1/2 - (x - k), & x\in [k+1/4, k+3/4],
\end{cases}\qquad k\in \Z,\] which is \(1\)-periodic, \(1\)-Lipschitz, and \(|\phi(x)|\le 1/4\) for all \(z\in \R\) (see Figure~\ref{fig:phi-nn}). The following lemma shows that it interestingly coincides with an one hidden layer neural network on some interval. 

\begin{lemma}
\label{lem:phi-nn-eq}
For \(R \in \N\), let \[\nn(x) = (x+R)_+ - (x-R)_+ + 2 \sum_{k=1}^{2R} \left( x+R+\frac{1}{4}-k \right)_+ - \left( x+R+\frac{3}{4}-k \right)_+. \]
Then, \(\nn(x) = \phi(x) \cdot \mathbf{1}\{x \in [-R,R]\}\). 
\end{lemma}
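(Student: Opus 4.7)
The plan is to prove the identity by case analysis on $x$, tracking where each ReLU activates and matching the resulting piecewise linear expression to $\phi$.

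First I would dissect $\nn(x)$ into manageable pieces. The opening pair $(x+R)_+ - (x-R)_+$ is a clamped ramp equal to $0$ on $(-\infty, -R]$, to $x+R$ on $[-R, R]$, and to $2R$ on $[R, \infty)$. For each $k \in \{1,\dots, 2R\}$, let $S_k(x) = 2\bigl[(x+R+\tfrac14-k)_+ - (x+R+\tfrac34-k)_+\bigr]$. A short computation shows $S_k$ is a monotone step-down profile: it equals $0$ for $x \le k - R - \tfrac34$, linearly decreases to $-1$ on the transition window $[k-R-\tfrac34,\ k-R-\tfrac14]$ where $S_k(x) = -2(x+R+\tfrac34-k)$, and equals $-1$ for $x \ge k - R - \tfrac14$.

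Second I would dispatch the trivial regions. For $x \le -R$, the ramp vanishes, and every transition window lies in $[1/4 - R, \infty)$, so each $S_k$ vanishes too; hence $\nn(x) = 0 = \phi(x)\cdot \mathbf{1}\{x\in[-R,R]\}$. For $x \ge R$, the ramp contributes $2R$ while every $S_k$ has saturated at $-1$ (since $k-R-\tfrac14 \le R - \tfrac14 \le x$), so the sum is $-2R$ and $\nn(x) = 0$, again matching the right-hand side.

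Third, and this is the core of the argument, I would handle the interior $x \in [-R, R)$ by substituting $y = x+R \in [0, 2R)$ and writing $y = n + t$ with $n = \lfloor x \rfloor + R \in \{0,\dots,2R-1\}$ and $t \in [0,1)$. The transition intervals sit at $k \in (y+\tfrac14,\ y+\tfrac34)$, which is an open interval of length $\tfrac12$ containing at most one integer. This partitions $t$ into three cases:
\begin{itemize}
\item If $t \in [0, \tfrac14]$, no integer lies strictly in $(y+\tfrac14, y+\tfrac34)$; exactly $k=1,\dots,n$ are saturated, giving $\sum_k S_k = -n$ and $\nn(x) = y - n = t$.
\item If $t \in [\tfrac14, \tfrac34]$, $k=n+1$ lies in the transition window, contributing $-2(t-\tfrac14)$, while $k=1,\dots,n$ contribute $-1$ each, yielding $\nn(x) = t - n + (-n - 2(t-\tfrac14)) \cdot (\text{correction})$; careful arithmetic gives $\nn(x) = \tfrac12 - t$.
\item If $t \in [\tfrac34, 1)$, exactly $k = 1,\dots,n+1$ are saturated, giving $\sum_k S_k = -(n+1)$ and $\nn(x) = t - 1$.
\end{itemize}
I would then verify that $\phi(x)$ takes precisely these three values by inspecting $\lfloor x - \tfrac14\rfloor$: in the first regime $\lfloor x - \tfrac14\rfloor = \lfloor x \rfloor - 1$, in the second it equals $\lfloor x \rfloor$ with $x - \tfrac34 - \lfloor x\rfloor \le 0$, and in the third it equals $\lfloor x \rfloor$ with $x-\tfrac34-\lfloor x \rfloor \ge 0$. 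In each regime the definition $\phi(x) = |x - \tfrac34 - \lfloor x - \tfrac14\rfloor| - \tfrac14$ simplifies to $t$, $\tfrac12 - t$, and $t - 1$ respectively, matching $\nn(x)$.

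The main obstacle is the bookkeeping in the middle regime, where a single ReLU is in its non-saturated transition and I must get both its coefficient and its boundary values right to match the triangle-wave exactly; the endpoints $t = \tfrac14, \tfrac34$ are best checked separately to ensure continuity is preserved across the cases. The outer cases $x \le -R$ and $x \ge R$ are essentially immediate once the piecewise structure of $S_k$ is fixed.
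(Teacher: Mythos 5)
Your argument is correct and matches the paper's proof in all essentials: isolate the clamped ramp, treat each ReLU pair as a step profile $S_k$ with a flat, transition, and saturated regime, dispatch $x\le -R$ and $x\ge R$ directly, and on $[-R,R]$ count which $S_k$ are zero, saturated, or transitioning. The paper splits the interior into $x\in[k-\tfrac{1}{4},k+\tfrac{1}{4}]$ and $x\in[k+\tfrac{1}{4},k+\tfrac{3}{4}]$ for $k\in\Z$, which after unwrapping is identical to your three $t$-ranges, so the difference is purely notational.
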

\begin{proof}
For \(x \le -R\), all the ReLU functions evaluate to \(0\), and \(\nn(x) = 0\). For \(x \ge R\), all the ReLU functions evaluate to \(\textsf{id}\), and \(
    \nn(x) = (x+R) - (x-R) + 2\sum_{k=1}^{2R}\left( x+R+\frac{1}{4}-k \right) - \left( x+R+\frac{3}{4}-k \right) = 2R - 2R = 0\). 
The interesting case is of course when \(x\in [-R, R]\). Observe that for \(k=1,2,\dots, 2R\), \[\left( x+R+\frac{1}{4}-k \right)_+ - \left( x+R+\frac{3}{4}-k \right)_+ = \begin{cases}
    0, & x < -R-3/4+k, \\
    -1/2, & x > -R-1/4+k, \\
    -x-R-3/4+k, & \text{otherwise}.
\end{cases}\]
If \(x \in [k-1/4,k+1/4]\) for some \(k\in \Z\), then \(\nn(x) = (x+R) + 2(-1/2)(R+k) = x - k = \phi(x)\). If \(x \in [k+1/4, k+3/4]\) for some \(k\in \Z\), then \(\nn(x) = (x+R) + 2(-x-R-3/4+R+k+1)+2(-1/2)(R+k)=1/2-(x-k) = \phi(x)\). 
\end{proof}

\begin{figure}[htbp]
\centering
\includegraphics[width=0.8\linewidth]{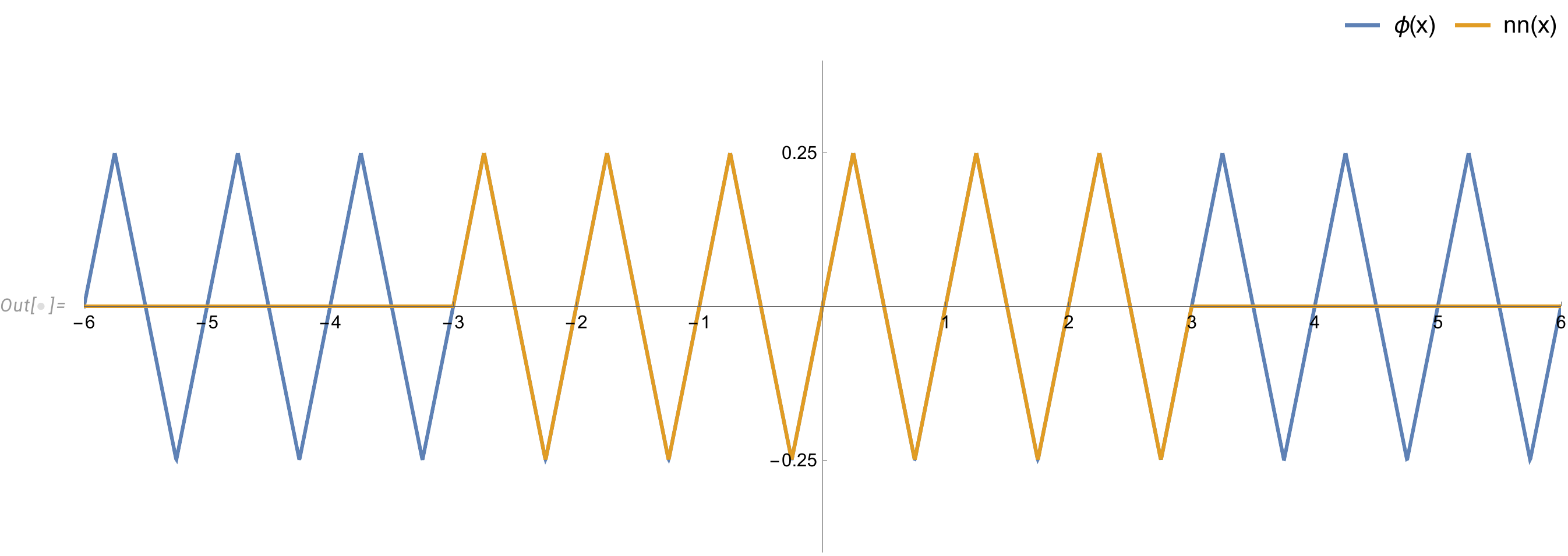}
    \caption{\(\phi(x)\) and \(\nn(x)\) for \(R=3\)}
    \label{fig:phi-nn}
\end{figure}
We next consider an instantiation of \(\nn\) from Lemma~\ref{lem:phi-nn-eq} that coincides with \(\phi\) on \([-R, R]\) for arbitrary \(R = \omega(\gamma \sqrt{\log d})\). Observe that $\nn$ which takes a single input, one hidden layer neural network with \((4R+2)\) ReLU neurons. We can then define the multivariate version of $\nn$ as \(\NN(x) = \nn(\gamma \langle w, x \rangle) \), which is also an one hidden layer neural network. This way we can define the following subclass of one hidden layer neural networks $\mathcal{F}_k^\NN$ given by,
\begin{align}\mathcal{F}_{R}^\NN = \left\{ f(x) = \nn(\gamma \langle w, x \rangle) \mid w \in \R^{d}, \gamma \in \R  \right\}, \label{eq:NN_class}\end{align}
which contains one hidden layer neural networks with width \(O(R)\). 
Recall from the previous section that, for \(x \sim N(0, I_d)\) and \(\xi \sim N(0, \sigma^2)\), \(P_\phi\) denotes the distribution of \((x, \phi(\gamma \langle w, x \rangle) + \xi)\), which is the input for learning the periodic function \(\phi\). Let \(P_{\NN}\) denote the distribution of \((x, \NN(x) + \xi)\), which is the input for learning the one hidden layer neural network \(\NN\). We next show that samples generated from $P_{\phi}$ are essentially the same as samples generated from $P_{\NN}$.
\begin{lemma}
\label{lem:phi-nn-tv-close}
For \(R\in \N\), the total variance distance between \(P_\phi\) and \(P_\NN\) is upper bounded by $O \left( \frac{\gamma \exp(-R^2/2\gamma^2)}{\sigma R}\right).$
When \(\gamma = \poly(d)\), and \(R = \gamma \sqrt{\omega(\log d) + 2\log(1/\sigma)}\), the total variation distance is 
\[O(\poly(d) \exp(- \omega(\log d))) = d^{-\omega(1)}.\]
\end{lemma}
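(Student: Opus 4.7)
The plan is to exploit Lemma~\ref{lem:phi-nn-eq} to show that $\phi(\gamma\langle w,x\rangle)$ and $\NN(x)=\nn(\gamma\langle w,x\rangle)$ agree exactly whenever $u:=\gamma\langle w,x\rangle\in[-R,R]$, so $P_\phi$ and $P_\NN$ can be coupled through the same $x$ and $\xi$ and only disagree on the "tail event" $\{|u|>R\}$. Since $w\in S^{d-1}$ and $x\sim N(0,I_d)$, the scalar $\langle w,x\rangle$ is a standard Gaussian, hence $u\sim N(0,\gamma^2)$ has Gaussian tails, and choosing $R$ a few standard deviations of $u$ beyond $\gamma$ will make that event exponentially unlikely.

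More concretely, first I would write
\[
\TV(P_\phi,P_\NN) \;=\; \E_x\bigl[\TV\bigl(P_\phi(\,\cdot\,\mid x),\,P_\NN(\,\cdot\,\mid x)\bigr)\bigr],
\]
and observe that the inner TV vanishes on $\{|u|\le R\}$ (by Lemma~\ref{lem:phi-nn-eq}), while on $\{|u|>R\}$ the two conditional distributions are $N(\phi(u),\sigma^2)$ and $N(0,\sigma^2)$. Then I would apply Pinsker's inequality to bound the Gaussian TV by $|\phi(u)|/(2\sigma)\le 1/(8\sigma)$ since $\|\phi\|_\infty\le 1/4$. This yields
\[
\TV(P_\phi,P_\NN) \;\le\; \frac{1}{8\sigma}\,\Pr\bigl[\gamma|\langle w,x\rangle|>R\bigr].
\]

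Next I would apply the standard Mills-ratio tail bound $\Pr[|Z|>t]\le \tfrac{2}{t\sqrt{2\pi}}e^{-t^2/2}$ for $Z\sim N(0,1)$ with $t=R/\gamma$, obtaining
\[
\Pr\bigl[\gamma|\langle w,x\rangle|>R\bigr]\;\le\;\frac{2\gamma}{R\sqrt{2\pi}}\,\exp\!\bigl(-R^2/(2\gamma^2)\bigr),
\]
and combining with the previous display gives the claimed $O\bigl(\gamma\exp(-R^2/(2\gamma^2))/(\sigma R)\bigr)$ bound. For the asymptotic statement, I would substitute $R=\gamma\sqrt{\omega(\log d)+2\log(1/\sigma)}$ and $\gamma=\poly(d)$ so that $R^2/(2\gamma^2)=\omega(\log d)+\log(1/\sigma)$, making $\exp(-R^2/(2\gamma^2))=\sigma\cdot e^{-\omega(\log d)}$; the $\sigma$ factors cancel, leaving $\poly(d)\cdot d^{-\omega(1)}=d^{-\omega(1)}$.

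There is no serious obstacle here, since everything reduces to Gaussian concentration once the coupling observation is in place. The only subtle point is the $1/\sigma$ factor in the TV bound, which comes from comparing Gaussians of the same variance $\sigma^2$ but different means; this is why the hypothesis on $R$ has to absorb an extra $2\log(1/\sigma)$ inside the square root, and I would make sure that substitution is carried out carefully to verify the cancellation of $\sigma$.
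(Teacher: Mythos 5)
Your proof is correct and follows essentially the same route as the paper's: condition on $x$, bound the conditional TV of two equal-variance Gaussians by the normalized difference of means (a Pinsker-type bound giving $|\phi(u)-\nn(u)|/(2\sigma)$), observe via Lemma~\ref{lem:phi-nn-eq} that this difference vanishes on $\{|u|\le R\}$ and is at most $1/4$ off that event, and close with the Mills-ratio Gaussian tail estimate. The only cosmetic difference is that you split the expectation explicitly by the indicator of $\{|u|>R\}$, whereas the paper writes $\E_x\bigl[|\phi(\gamma x)-\nn(\gamma x)|\bigr]\le\tfrac14\Pr[|x|\ge R/\gamma]$ directly; the asymptotic substitution and the cancellation of the $\sigma$ factor are handled identically.
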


\begin{proof}
Note that conditioning on \(x\), \(z_1=\phi(\gamma \langle w, x \rangle)+\xi\) and \(z_2=\nn(\gamma \langle w, x \rangle)+\xi\) are Gaussians with mean  \(\phi(\gamma \langle w, x \rangle)\) and \(\nn(\gamma \langle w, x \rangle)\), and variance \(\sigma^2\). Thus, \begin{align*}
    &\TV(P_\phi, P_\NN) \le \E_{x\sim N(0, I_d)} [\TV(z_1|x, z_2|x)] 
    \le \E_{x\sim N(0, I_d)} \left[ \frac{|\phi(\gamma \langle w, x \rangle) - \nn(\gamma \langle w, x \rangle)|}{2\sigma} \right] \\
    &\qquad \qquad = \frac{1}{2\sigma} \E_{x\sim N(0, 1)} [|\phi(\gamma x) - \nn(\gamma x) |] 
    \le \frac{1}{8\sigma} \Pr_{x\sim N(0, 1)} [|x| \ge R/\gamma] \\
    &\qquad \qquad \qquad \le \frac{1}{4\sigma} \frac{\exp(-R^2/2\gamma^2)}{\sqrt{2\pi }R/\gamma} 
    \le O \left( \frac{\gamma \exp(-R^2/2\gamma^2)}{\sigma R}\right). 
\end{align*}
\end{proof}

From Lemmas~\ref{lem:phi-nn-eq} and~\ref{lem:phi-nn-tv-close} we have that there exists a subclass $\mathcal{F}_R^{\NN}$ of one hidden layer neural networks that produces the same polynomially-many samples as the class $\mathcal{F}_\gamma^{\phi}$ for this carefully picked 1-Lipschitz and 1-periodic function $\phi$. We next show that in fact a learning algorithm for $\mathcal{F}_R^{\NN}$ implies a learning algorithm for $\mathcal{F}_\gamma^{\phi}$.

\begin{lemma}
\label{lem:phi-to-nn}
Let \(d\in \N\), \(\gamma = \poly(d)\), \(\sigma = \sigma(d) \in (0, 1)\), \(\varepsilon = \poly(d)^{-1}\). Moreover, let \(\phi(x) = \left| x-3/4 - \left\lfloor x-1/4 \right\rfloor \right|-1/4\), and \(R = \gamma \sqrt{\omega(\log d) + 2\log(1/\sigma)}\). Then a polynomial-time algorithm that \(\varepsilon \)-weakly learns the function class \(\mathcal{F}_R^\NN = \left\{ f_{w}(x) = \nn(\gamma \langle w, x \rangle) \mid w \in \R^{d}, \gamma \in \R  \right\}\) over Gaussian inputs \(x \overset{\iid}{\sim} N(0,I_d)\) under Gaussian noise \(\xi\overset{\iid}{\sim} N(0,\sigma^2)\) implies a polynomial-time algorithm that \(\frac{\varepsilon }{2}\)-weakly learns the function class \(\mathcal{F}_{\gamma}^\phi = \{f_{\gamma,w}(x) = \phi(\gamma \langle w, x \rangle) \mid w\in S^{d-1}\}\) over the same input and noise distribution. 
\end{lemma}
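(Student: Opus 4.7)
The plan is to use the hypothesized weak learner $\mathcal{A}_\NN$ for $\mathcal{F}_R^\NN$ as a black box: feed it $m_1=\poly(d)$ samples drawn from $P_\phi$ (the input distribution for the periodic neuron) rather than from $P_\NN$, and output whatever hypothesis it returns, clipped to $[-1,1]$ exactly as in the proof of Theorem~\ref{thm:CLWE-to-phi}. Because $\phi(\gamma\langle w,x\rangle)$ and $\nn(\gamma\langle w,x\rangle)$ agree on the interval where $|\gamma\langle w,x\rangle|\le R$ by Lemma~\ref{lem:phi-nn-eq}, the input to $\mathcal{A}_\NN$ is statistically nearly indistinguishable from genuine $P_\NN$ data, so its output hypothesis inherits the weak-learning guarantee for $P_\NN$. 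The work is then just to transfer that guarantee from $P_\NN$ to $P_\phi$ while losing at most half the edge $\varepsilon$.

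The first step is a straightforward sample-level indistinguishability bound: subadditivity of total variation together with Lemma~\ref{lem:phi-nn-tv-close} (whose hypothesis on $R$ is exactly the one we assume) gives $\TV(P_\phi^{\otimes m_1},P_\NN^{\otimes m_1})\le m_1\cdot d^{-\omega(1)} = d^{-\omega(1)}$, and the data processing inequality yields that with probability at least $2/3 - d^{-\omega(1)}$ the returned $h$ satisfies $L_{P_\NN}(h) \le L_{P_\NN}(\E[\NN(x)]) - \varepsilon$. The second step is a direct loss comparison: for any clipped $h$, expanding the squared loss gives
\[
L_{P_\phi}(h) - L_{P_\NN}(h) = \E\bigl[(\phi(\gamma\langle w,x\rangle)-\NN(x))\cdot (2h(x) - \phi(\gamma\langle w,x\rangle) - \NN(x) - 2\xi)\bigr],
\]
whose integrand is supported on the tail event $E=\{|\gamma\langle w,x\rangle|>R\}$. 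Using $|h|,|\phi|,|\NN|\le 1$, $\E|\xi|\le \sigma$, and independence of $\xi$ from $x$, this difference is $O(\Pr[E]\cdot(1+\sigma)) = d^{-\omega(1)}$. The identical estimate applied to the constant predictor $c_\NN=\E[\NN(x)]$, combined with the scalar identity $L_{P_\phi}(c)-L_{P_\phi}(c_\phi)=(c-c_\phi)^2$ and the trivial bound $|c_\phi-c_\NN|=|\E[(\phi-\NN)\mathbf{1}_E]|=d^{-\omega(1)}$, also gives $|L_{P_\phi}(c_\phi) - L_{P_\NN}(c_\NN)| = d^{-\omega(1)}$.

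Chaining these two closeness estimates yields
\[
L_{P_\phi}(h) \le L_{P_\NN}(h) + d^{-\omega(1)} \le L_{P_\NN}(c_\NN) - \varepsilon + d^{-\omega(1)} \le L_{P_\phi}(c_\phi) - \varepsilon + d^{-\omega(1)},
\]
and since $\varepsilon = 1/\poly(d)$ swamps the super-polynomial error for $d$ large, the right-hand side is at most $L_{P_\phi}(c_\phi)-\varepsilon/2$, as required. The only real obstacle is calibrating the error terms against the edge $\varepsilon$: both the sample-level TV (which carries a $1/\sigma$ factor from comparing two Gaussian label distributions) and the loss-level slack (which carries $\Pr[E]$) must be $o(\varepsilon)$ despite $\sigma$ and $\varepsilon$ being inverse-polynomial. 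This is precisely what the choice $R = \gamma\sqrt{\omega(\log d)+2\log(1/\sigma)}$ in the statement is designed to ensure, via Lemma~\ref{lem:phi-nn-tv-close}; beyond this calibration the argument is routine.
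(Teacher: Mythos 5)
Your proposal is correct and follows essentially the same route as the paper: run the $\mathcal{F}_R^{\NN}$ learner on $P_\phi$ data, use $\TV(P_\phi^{\otimes m_1}, P_\NN^{\otimes m_1}) \le m_1 \cdot \TV(P_\phi, P_\NN) = d^{-\omega(1)}$ (via Lemma~\ref{lem:phi-nn-tv-close} and data processing) to inherit the $P_\NN$ weak-learning guarantee, then transfer the loss bound from $P_\NN$ to $P_\phi$ by the difference-of-squares factoring, whose integrand is supported on the tail event $\{|\gamma\langle w,x\rangle| > R\}$. The only cosmetic deviations are the clipping range ($[-1,1]$ rather than the tighter $[-1/4,1/4]$ used by the paper, which changes nothing) and the handling of the constant-predictor term: the paper invokes optimality of $\E[\NN(x)]$ among constants for $P_\NN$ to pass directly to $L_{P_\NN}(\E[\phi])$ and then compares the two losses at the \emph{same} constant $c=\E[\phi]$, whereas you compare $c_\NN$ with $c_\phi$ via a scalar identity plus $|c_\phi - c_\NN| = d^{-\omega(1)}$; both are valid and give the same conclusion.
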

\begin{proof}
Let \(\nn\) be the function in Lemma~\ref{lem:phi-nn-eq} that coincides with \(\phi\) on \([-R, R]\) for arbitrary $R$ satisfying \(R = \gamma \sqrt{\omega(\log d) + 2\log(1/\sigma)}\). Then, let \(\NN(x) = \nn(\gamma \langle w, x \rangle)\), and thus \(\NN \in \mathcal{F}_R^\NN\). 

For \(\varepsilon = \poly(d)^{-1}\), and \(\mathcal{A}\) be a polynomial-time learning algorithm that takes as input \(m = \poly(d)\) samples from \(P_\NN\) and with probability \(2/3\) outputs a hypothesis \(h':\R\to \R\) such that \(L_{P_\NN}(h') \le L_{P_\NN}(\E[\NN(x)]) - \varepsilon \). Since we are using the squared loss, \(\tilde h(x) = \sgn(h'(x))\cdot \max(|h'(x)|,1/4)\) is always no worse than \(h'(x)\), as \(\NN(x)\in [-1/4,1/4]\) and the noise on the label is unbiased. Thus, we can assume without loss of generality that \(h'(x) \in [-1/4,1/4]\). 

To learn the function class \(\mathcal{F}_\gamma^\phi\) given \(m=\poly(d)\) samples from \(P_\phi\), run \(\mathcal{A}\) directly on these samples, which gives \(h\), and output \(h\). Similarly, by the data processing inequality, \[\TV(h, h') \le \TV(P_\phi^{\otimes m}, P_\NN^{\otimes m}) \le m\cdot \TV(P_\phi, P_\NN).\]

By Lemma~\ref{lem:phi-nn-tv-close}, this is upper bounded by \( m\cdot d^{-\omega(1)} = d^{-\omega(1)} < 0.01\). Thus, with probability at least \(2/3 - 0.01\), we have \(L_{P_\NN}(h) \le L_{P_\NN}(\E[\NN(x)]) - \varepsilon \). 
Since \(\E[\NN(x)]\) is the optimal constant predictor for \(P_\NN\), we have \(L_{P_\NN}(h) \le L_{P_\NN}(\E[\NN(x)]) - \varepsilon \le L_{P_\NN}(\E[\phi(\gamma \langle w, x \rangle)]) - \varepsilon \). 
Similarly to the proof of Theorem~\ref{thm:CLWE-to-phi}, compute \begin{align*}
    |L_{P_\NN} (h) - L_{P_\phi}(h) | &= \left| \E_{x\sim N(0, I_d)}(h(x) - \NN(x))^2 - \E_{x\sim N(0, I_d)} (h(x) - \phi(\gamma \langle w, x \rangle))^2 \right| \\
    &= \left| \E_x \left[ \left(\NN(x) - \phi(\gamma \langle w, x \rangle)\right) \left(\NN(x) + \phi(\gamma \langle w, x \rangle) - 2h(x)\right) \right] \right| \\
    &\le \E_x \left| \NN(x) - \phi(\gamma \langle w, x \rangle) \right|.
\end{align*}
We know from the proof of Lemma~\ref{lem:phi-nn-tv-close} that \(\E_x |\NN(x) - \phi(\gamma \langle w, x \rangle)| \le d^{-\omega(1)} \le \varepsilon / 4\). 
Thus, \(|L_{P_\NN} (h) - L_{P_\phi}(h) | \le \varepsilon /4\). 
Let \(c = \E[\phi(\gamma \langle w, x \rangle)]\in [-1/4, 1/4]\). Then by the same argument, \begin{align*}
    &|L_{P_\NN}(c) - L_{P_\phi}(c)| = \left| \E_{x \sim N(0, I_d)}(c - \NN(x))^2 - \E_{x \sim N(0, I_d)}(c - \phi(x))^2 \right| \\
    &\qquad  = \left| \E_x \left[ \left(\NN(x) - \phi(\gamma \langle w, x \rangle)\right) \left(\NN(x) + \phi(\gamma \langle w, x \rangle) - 2c\right) \right] \right| \\
    &  \qquad \qquad \le \E_x \left| \NN(x) - \phi(\gamma \langle w, x \rangle) \right| \le \varepsilon /4.
\end{align*}
Therefore, with probability at least \(2/3 - 0.01\), we have \(L_{P_\phi}(h) \le L_{P_\NN}(h) + \frac{\varepsilon }{4} \le L_{P_\NN}[\E[\phi(\gamma \langle w, x \rangle)]] - \frac{3\varepsilon }{4} \le L_{P_\phi}[\E[\phi(\gamma \langle w, x \rangle)]] - \frac{\varepsilon }{2}\). 
\end{proof}

The final step is to combine this with the hardness of learning $\mathcal{F}_\gamma^{\phi}$ from Theorem~\ref{thm:CLWE-to-phi} with the equivalence of learning $\mathcal{F}_\gamma^{\phi}$ and $\mathcal{F}_R^{\NN}$ to get the following result, which directly implies Theorem~\ref{thm:mainTheorem_0}.

\begin{theorem} \label{thm:mainTheorem}
Let \(d\in \N\), \(\sigma = \poly(d)^{-1}\), \(\varepsilon = \poly(d)^{-1}\), and \(R = \omega(\sqrt{d \log d})\). Then a polynomial-time algorithm that \(\varepsilon \)-weakly learns the function class $\mathcal{F}_R^\NN,$ defined in \eqref{eq:NN_class} over Gaussian inputs \(x \overset{\iid}{\sim} N(0,I_d)\) under Gaussian noise \(\xi\overset{\iid}{\sim} N(0,\sigma^2)\) implies a polynomial-time quantum algorithm that approximates $\mathrm{GapSVP}$ to within polynomial factors. 
\end{theorem}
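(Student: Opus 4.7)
The plan is to chain the two main pieces already developed in Sections~\ref{sec:clwe_lip} and~\ref{sec:clwe_1hlnn}, namely Lemma~\ref{lem:phi-to-nn} together with Corollary~\ref{cor:phi-solves-SVP}, with the only non-trivial step being the choice of parameters so that the growth hypothesis on $R$ in the theorem statement lines up with the hypotheses of Lemma~\ref{lem:phi-to-nn} and the CLWE hardness threshold of Theorem~\ref{thm:clwe-hardness}.

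First I would fix $\phi(x) = |x - 3/4 - \lfloor x - 1/4\rfloor| - 1/4$, the $1$-Lipschitz, $1$-periodic sawtooth built in Section~\ref{sec:clwe_1hlnn}, and set the CLWE frequency to $\gamma = 2\sqrt{d}$, the smallest value for which Theorem~\ref{thm:clwe-hardness} delivers a GapSVP-based hardness. The key bookkeeping step is to verify that $R = \omega(\sqrt{d \log d})$, as in the hypothesis of the theorem, also satisfies $R = \gamma\sqrt{\omega(\log d) + 2\log(1/\sigma)}$, the condition demanded by Lemma~\ref{lem:phi-to-nn}. Since $\sigma = 1/\poly(d)$ implies $\log(1/\sigma) = O(\log d)$, we have $R^2/\gamma^2 = R^2/(4d) = \omega(\log d)$, which dominates the $O(\log d)$ contribution of $2\log(1/\sigma)$, so the parameter condition is met with $\gamma = 2\sqrt{d}$. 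In particular, the underlying neural network class $\mathcal{F}_R^\NN$ has width $O(R) = \poly(d)$ and frequency $\gamma = \poly(d)$.

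Next I would invoke Lemma~\ref{lem:phi-to-nn} with this choice of $\phi$, $\gamma$, $R$: the assumed polynomial-time $\varepsilon$-weak learner for $\mathcal{F}_R^\NN$ under $N(0,\sigma^2)$ label noise is converted into a polynomial-time $(\varepsilon/2)$-weak learner for the periodic neuron class $\mathcal{F}_\gamma^\phi$ under the same Gaussian noise. Since $\varepsilon = 1/\poly(d)$, the resulting edge $\varepsilon/2$ is also $1/\poly(d)$. Finally, applying Corollary~\ref{cor:phi-solves-SVP} with parameters $\gamma = 2\sqrt{d} \ge 2\sqrt{d}$, $\gamma = \poly(d)$, $\sigma = 1/\poly(d)$, and edge $\varepsilon/2 = 1/\poly(d)$ promotes this weak learner into a polynomial-time quantum algorithm that approximates $\mathrm{GapSVP}$ within $\poly(d)$ factors, as required.

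There is essentially no deep technical obstacle left at this stage, since both the ``Gaussianization'' of the CLWE-induced adversarial noise (Theorem~\ref{thm:CLWE-to-phi}) and the exact finite-range replacement of $\phi$ by an explicit polynomial-width ReLU network (Lemmas~\ref{lem:phi-nn-eq} and~\ref{lem:phi-nn-tv-close}) have already been carried out. The only mild subtlety, and the point I would double-check carefully, is the matching of the asymptotic growth rates: we must pick $\gamma$ large enough that Theorem~\ref{thm:clwe-hardness} applies ($\gamma \ge 2\sqrt{d}$) but small enough that the required range $R = \gamma\sqrt{\omega(\log d) + 2\log(1/\sigma)}$ still fits inside the given $R = \omega(\sqrt{d\log d})$. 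The choice $\gamma = 2\sqrt{d}$ simultaneously realizes both, which closes the argument.
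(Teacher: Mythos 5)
Your proof matches the paper's own argument essentially line for line: set $\gamma = 2\sqrt{d}$, fix the explicit sawtooth $\phi$, apply Lemma~\ref{lem:phi-to-nn} to pass from a weak learner for $\mathcal{F}_R^\NN$ to a weak learner for $\mathcal{F}_\gamma^\phi$, and then invoke Corollary~\ref{cor:phi-solves-SVP}. The only difference is that you spell out the parameter check that $R = \omega(\sqrt{d\log d})$ together with $\sigma = 1/\poly(d)$ meets the range condition $R = \gamma\sqrt{\omega(\log d) + 2\log(1/\sigma)}$ of Lemma~\ref{lem:phi-to-nn}, which the paper leaves implicit; that computation is correct.
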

\begin{proof}
Let \(\gamma = 2\sqrt{d}\) and \(\phi(x) = |x-3/4- \left\lfloor x-1/4 \right\rfloor|-1/4\), which is \(1\)-Lipschitz, \(1\)-periodic, and \(\phi(x) \in [-1/4,1/4]\) for all \(x\in \R\). Then by Lemma~\ref{lem:phi-to-nn}, there is a polynomial-time algorithm that \(\frac{\varepsilon }{2}\)-weakly learns the function class \(\mathcal{F}_{\gamma}^\phi = \{f_{\gamma,w}(x) = \phi(\gamma \langle w, x \rangle) \mid w\in S^{d-1}\}\) over the same input and noise distribution. Then by Corollary~\ref{cor:phi-solves-SVP}, there is a polynomial-time quantum algorithm that approximates SVP to within polynomial factors. 
\end{proof}

\section{Super-Polynomially Small Noise} \label{sec:superpolynomial}

In this section we show that our lower bound holds even if we make the noise negligible, i.e., smaller than any inverse polynomial in $d$. Even in this very low noise regime, any algorithm for learning 1-hidden layer neural networks with Gaussian input implies a breakthrough in cryptography and algorithmic theory of lattices. We make this claim precise below.
\medskip

\noindent If we remove the restriction of \(\sigma = \poly(1/d)\) in Theorem~\ref{thm:mainTheorem}, then using the same outline of the proof we can show the following lemma that reduces learning 1-hidden layer neural networks to $\clwe$ even when $\beta$ is negligible.

\begin{lemma}
\label{lem:clwe-nn-smallnoise}
Let \(d\in \N\), \(\gamma = \poly(d)\) with \(\gamma = \omega(\sqrt{\log d})\), \(\sigma = \sigma(d) \in (0, 1)\), \(\varepsilon = \poly(d)^{-1}\), and \(R = \gamma \sqrt{\omega(\log d) + 2\log(1/\sigma)}\). Then a polynomial-time algorithm that \(\varepsilon \)-weakly learns the function class $\mathcal{F}_R^\NN,$ defined in \eqref{eq:NN_class} over Gaussian inputs \(x \overset{\iid}{\sim} N(0,I_d)\) under Gaussian noise \(\xi\overset{\iid}{\sim} N(0,\sigma^2)\) implies a polynomial-time algorithm for \(\clwe_{\beta,\gamma}\) for any \(\beta \le \sigma/ \poly(d)\). 
\end{lemma}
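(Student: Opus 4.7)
The plan is to mirror the proof of Theorem~\ref{thm:mainTheorem}, chaining Lemma~\ref{lem:phi-to-nn} with Theorem~\ref{thm:CLWE-to-phi}, but \emph{without} the final step that invokes Theorem~\ref{thm:clwe-hardness}. Stopping at the CLWE level is precisely what lets us relax both the restriction $\gamma \geq 2\sqrt{d}$ needed for the GapSVP reduction (down to the weaker $\gamma = \omega(\sqrt{\log d})$) and the restriction $\sigma = \poly(d)^{-1}$ (now allowing super-polynomially small $\sigma$, as long as $\log(1/\sigma) = \poly(d)$).

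First I would apply Lemma~\ref{lem:phi-to-nn} to the given polynomial-time $\varepsilon$-weak learner for $\mathcal{F}_R^\NN$. All of its hypotheses are assumed here verbatim: $\gamma = \poly(d)$, $\sigma \in (0,1)$, $\varepsilon = \poly(d)^{-1}$, and the prescribed $R = \gamma\sqrt{\omega(\log d) + 2\log(1/\sigma)}$. The output is a polynomial-time $\varepsilon/2$-weak learner for $\mathcal{F}_\gamma^\phi$ with $\phi(x) = |x - 3/4 - \lfloor x - 1/4\rfloor| - 1/4$, under the same Gaussian input and label-noise distribution. The $2\log(1/\sigma)$ term baked into $R$ is exactly what keeps the total-variation bound of Lemma~\ref{lem:phi-nn-tv-close} at $d^{-\omega(1)}$ even as $\sigma \to 0$, which is where the previous proof of Theorem~\ref{thm:mainTheorem} implicitly relied on $\sigma = \poly(d)^{-1}$.

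Next I would invoke Theorem~\ref{thm:CLWE-to-phi} on the resulting $\phi$-learner. Its remaining hypothesis $\gamma = \omega(\sqrt{\log d})$ is assumed in our statement. Writing $m_1 = \poly(d)$ for the sample complexity of the $\phi$-learner, Theorem~\ref{thm:CLWE-to-phi} yields a polynomial-time algorithm for $\clwe_{\beta,\gamma}$ whenever $\beta \leq \min\{\sigma^2/(10^4 m_1^2),\,(\varepsilon/2)^2/10^3\}$. Since $\sigma \in (0,1)$ gives $\sigma^2 \leq \sigma$, this bound is at most $\sigma/\poly(d)$ for an appropriate polynomial, yielding the conclusion as stated.

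The hard part, such as it is, will be the bookkeeping: verifying that in this low-noise regime the two quantitative ingredients remain strong enough. These are (i) the Gaussianization step of Lemma~\ref{lem:noisy-tv-close}, whose $\sqrt{\beta}/\sigma$ bound stays small precisely as long as $\beta \ll \sigma^2$, and (ii) the $\phi \leftrightarrow \NN$ coupling of Lemma~\ref{lem:phi-nn-tv-close}, whose $\gamma\exp(-R^2/2\gamma^2)/(\sigma R)$ bound is driven to $d^{-\omega(1)}$ by the $2\log(1/\sigma)$ padding in $R$. Provided $\log(1/\sigma) = \poly(d)$, so that $R$ remains polynomial and $\mathcal{F}_R^\NN$ is a polynomial-size one-hidden-layer neural network class (as implicit in assuming a polynomial-time learner for it), both bounds remain negligible and the two-step reduction carries through unchanged.
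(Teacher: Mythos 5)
Your two-step reduction — Lemma~\ref{lem:phi-to-nn} to pass from $\mathcal{F}_R^\NN$ to $\mathcal{F}_\gamma^\phi$, then Theorem~\ref{thm:CLWE-to-phi} to pass from $\mathcal{F}_\gamma^\phi$ to $\clwe_{\beta,\gamma}$, stopping before the GapSVP step — is exactly the paper's own proof of this lemma.

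One quantitative point, however, deserves flagging. You correctly extract from Theorem~\ref{thm:CLWE-to-phi} the condition $\beta \le \min\bigl\{\sigma^2/(10^4 m_1^2),\ (\varepsilon/2)^2/10^3\bigr\}$, and you observe that, since $\sigma\in(0,1)$, this threshold is \emph{at most} $\sigma/\poly(d)$. But that inequality runs the wrong way for what you need: to conclude ``for any $\beta \le \sigma/\poly(d)$ we get a CLWE-solver,'' you would need the Theorem~\ref{thm:CLWE-to-phi} threshold to be \emph{at least} $\sigma/\poly(d)$, i.e.\ $\sigma^2/\poly(d) \ge \sigma/\poly(d)$, which fails precisely when $\sigma$ is super-polynomially small — the regime this lemma is designed for. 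What actually follows from the chain is the (slightly weaker, but entirely adequate) statement that the implication holds for any $\beta \le \sigma^2/\poly(d)$. This same imprecision is present in the paper's own statement of the lemma, so you have faithfully reproduced it, but the auxiliary sentence ``yielding the conclusion as stated'' does not follow from the $\sigma^2 \le \sigma$ observation; if anything, that observation shows the opposite. For the downstream uses (Theorem~\ref{thm:mainTheoremSmallError}, Corollary~\ref{cor:sup-poly-noise-implies-sub-exp-approx}) the exponent on $\sigma$ only affects a constant in the final GapSVP approximation factor, so nothing breaks --- but the bound as stated should read $\beta \le \sigma^2/\poly(d)$.

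Everything else in your proposal checks out: the hypotheses of Lemma~\ref{lem:phi-to-nn} are verified verbatim, the padding $2\log(1/\sigma)$ in $R$ is indeed what keeps Lemma~\ref{lem:phi-nn-tv-close} negligible as $\sigma\to 0$, $\gamma=\omega(\sqrt{\log d})$ is the only remaining hypothesis of Theorem~\ref{thm:CLWE-to-phi} and is assumed, and the implicit requirement $\log(1/\sigma)=\poly(d)$ is exactly what keeps $R$, and hence the width of the network class, polynomial.
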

\begin{proof}
Let \(\phi(x) = |x-3/4- \left\lfloor x-1/4 \right\rfloor|-1/4\), which is \(1\)-Lipschitz, \(1\)-periodic, and \(\phi(x) \in [-1/4,1/4]\) for all \(x\in \R\). Then by Lemma~\ref{lem:phi-to-nn}, there is a polynomial-time algorithm that \(\frac{\varepsilon }{2}\)-weakly learns the function class \(\mathcal{F}_{\gamma}^\phi = \{f_{\gamma,w}(x) = \phi(\gamma \langle w, x \rangle) \mid w\in S^{d-1}\}\) over the same input and noise distribution. Then by Theorem~\ref{thm:CLWE-to-phi}, there is a polynomial-time algorithm for \(\clwe_{\beta,\gamma}\) for any \(\beta \le \sigma/ \poly(d)\). 
\end{proof}

Therefore an algorithm for learning 1-hidden layer neural network implies that we can solve $\clwe_{\beta, \gamma}$ as long as $\beta$ is smaller than $\sigma/\poly(d)$. The next step is to connect an algorithm for $\clwe_{\beta, \gamma}$ to an algorithm for worst-case lattice problems even when $\beta$ is negligible. For the case $\beta \ge \poly(d)^{-1}$ we used the $\clwe$ hardness fromxs Theorem~\ref{thm:clwe-hardness}x, which requires \(\gamma/\beta = \poly(d)\). Nevertheless, we can bypass this condition by using the following recent theorem that reduces classical LWE to CLWE from \cite{gupte2022continuous}, together with the stronger quantum reduction from worst-case lattice problem to LWE due to \cite{regev2005lwe}.

\begin{theorem}[{\cite[Corollary 5]{gupte2022continuous}}]
\label{thm:LWE-to-CLWE}
Let \(d, n, q \in \N\), \(\gamma, \beta,\sigma' > 0\). Then for some constant $c>0,$ a polynomial-time algorithm for \(\CLWE_{\beta,\gamma}\) in dimension \(d\) implies a polynomial-time algorithm for \(\lwe_{q, D_{\Z,\sigma'}}\) in dimension \(n\), for \begin{align*}
    \gamma &= \omega(\sqrt{d \log d}), \qquad \beta \geq c \left( \frac{\sigma' \sqrt{d}}{q} \right),
\end{align*}
as long as \(\log(q) / 2^n = o(\poly(n)^{-1})\), \(d \ge 2 n\log q + \omega(\log n)\), and \(\sigma' \ge \omega(\sqrt{\log n})\). 
\end{theorem}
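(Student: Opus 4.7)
The plan is to exhibit a polynomial-time reduction that turns $\lwe_{q, D_{\Z,\sigma'}}$ samples in dimension $n$ into (approximate) $\CLWE_{\beta,\gamma}$ samples in dimension $d$. Morally, $\clwe$ is the continuous-torus analogue of $\lwe$, so the crux is to promote the discrete $\lwe$ vectors $a_i \in \Z_q^n$ to a continuous Gaussian input $x \in \R^d$. I would bridge this gap by a ``Gaussianization'' construction based on discrete-Gaussian combinations of many $\lwe$ samples.

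Concretely, take $m = \poly(d)$ fresh $\lwe$ samples $(a_i, b_i)$, rescale them to the torus as $(a_i/q,\, b_i/q \bmod 1)$, and stack them into $A \in \Z_q^{n\times m}$ and $b \in \Z_q^m$. Draw a discrete-Gaussian combiner $c \sim D_{\Z^m, r}$ with width $r$ above the smoothing parameter of $\Z^m$, and output the $\clwe$-candidate pair
\[
(x, z) \;=\; \bigl( Ac/q,\; \langle c, b\rangle/q \bmod 1 \bigr) \;=\; \bigl( Ac/q,\; \langle x, s\rangle + \langle c, e\rangle/q \bmod 1 \bigr).
\]
Finally, whiten $x$ to isotropic covariance and apply a fresh, uniformly random orthogonal rotation, so that the hidden direction induced by $s$ becomes a uniform $w \in S^{d-1}$ and the effective frequency matches the target $\gamma$.

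Correctness then reduces to two statistical claims. First, conditional on a typical $A$, the push-forward $Ac/q$ of the discrete Gaussian $c$ must be within negligible total-variation distance of an (approximately) isotropic continuous Gaussian on $\R^d$; this is precisely where the dimension lower bound $d \ge 2n\log q + \omega(\log n)$ enters, ensuring the image of $\Z^m$ under $A$ is ``dense enough'' inside $\Z_q^d$ for smoothing of a discrete Gaussian to produce a continuous output. The standard toolbox here is a leftover-hash-style lemma for discrete Gaussians combined with Banaszczyk-type Gaussian-measure bounds. Second, the label noise $\langle c,e\rangle/q$ is a sum of conditionally independent Gaussians with $\|c\|^2 \approx m r^2$ by concentration of the discrete Gaussian, so it is close in distribution to $N(0, m r^2 \sigma'^2/q^2)$; matching this to the target $\beta$ and tracking the scaling introduced by whitening produces the parameter relations $\beta \gtrsim c\, \sigma'\sqrt{d}/q$ and $\gamma = \omega(\sqrt{d\log d})$ stated in the theorem.

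The main obstacle is the first statistical claim: proving that the discrete-Gaussian push-forward $Ac/q$ is truly close to a continuous Gaussian in total variation, uniformly over a typical modular $A$. The discreteness of $c$ and the $\!\!\mod q$ structure of $A$ interact subtly, and the bound $d \ge 2n\log q + \omega(\log n)$ is essentially tight for such a smoothing statement, explaining why this exact dimension threshold appears in the hypothesis. Once this regularity is in hand, the remaining ingredients---sub-Gaussian concentration for the noise, rotational invariance of the Gaussian to randomize $w$ over $S^{d-1}$, and the data-processing inequality to chain with any $\clwe_{\beta,\gamma}$ distinguisher---are standard and routine.
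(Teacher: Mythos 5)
This theorem is not proven in the paper at all: it is stated and used as a black box, cited verbatim as Corollary 5 of Gupte, Vafa, and Vaikuntanathan (2022). There is therefore no ``paper's own proof'' for your attempt to be compared against---the authors explicitly defer to the external reference. Any proof you supply should be read as a reconstruction of that external result, not of anything in this manuscript.

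As a reconstruction, your sketch is in the right spirit (a smoothing-style reduction from $\lwe$ samples to a Gaussian-input $\clwe$ sample, governed by a leftover-hash-type condition), but it has a concrete dimensional flaw that makes the construction as written produce the wrong object. You form $A \in \Z_q^{n\times m}$ and set $x = Ac/q$ with $c \in \Z^m$, so $x$ lives in $\R^n$ (the $\lwe$ dimension), not in $\R^d$. Whitening and applying an orthogonal rotation preserve dimension, so the final sample is $n$-dimensional. But the theorem requires a $\clwe$ instance in dimension $d \ge 2n\log q + \omega(\log n)$, which is strictly larger than $n$. You even write that the LHL condition ensures ``the image of $\Z^m$ under $A$ is dense enough inside $\Z_q^d$,'' but the image of $A$ sits in $\Z_q^n$, not $\Z_q^d$; the roles of $n$, $m$, and $d$ are conflated in a way that breaks the argument. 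If anything, what you have sketched is closer to a dimension-preserving reduction $\lwe_n \to \clwe_n$ (which is a genuinely different, harder statement, and is in fact the main technical contribution of Gupte et al.), not the dimension-blowup statement $\lwe_n \to \clwe_d$ with $d \approx n\log q$ that Corollary 5 asserts and that this paper needs. Beyond the dimension issue, the central statistical claim---that the push-forward of a discrete Gaussian through a random modular matrix is negligibly close in total variation to a continuous Gaussian---is named but not argued, and the derivation of the exact parameter constraints $\gamma = \omega(\sqrt{d\log d})$ and $\beta \ge c\,\sigma'\sqrt{d}/q$ from the width $r$ of the combiner is not carried out, so the sketch does not yet constitute a proof even modulo the dimension problem.
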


\begin{theorem}[{\cite[Theorem 3.1, Lemma 3.20]{regev2005lwe}}]
\label{thm:SVP-to-LWE}
Let \(n,q\in \N\), \(\alpha \in (0, 1)\) such that \(\alpha q > 2\sqrt{n}\). Then a polynomial-time algorithm for \(\lwe_{q, D_{\Z, \alpha q}}\) in dimension \(n\) implies a polynomial-time quantum algorithm for \(O(n / \alpha)\)-GapSVP in dimension \(n\).
\end{theorem}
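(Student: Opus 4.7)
The plan is to establish this celebrated quantum reduction by composing three ingredients that together form the backbone of \cite{regev2005lwe}: a classical reduction from approximate GapSVP to a discrete Gaussian sampling (DGS) problem on the input lattice \(\Lambda\); a classical reduction from bounded distance decoding (BDD) on the dual lattice \(\Lambda^*\) to \(\lwe_{q,D_{\Z,\alpha q}}\); and a quantum ``iterative step'' that, given polynomially many samples from \(D_{\Lambda,r}\) together with a BDD oracle for \(\Lambda^*\), produces samples from \(D_{\Lambda,r'}\) with \(r'\) smaller than \(r\) by a constant factor. Iterating the quantum step from easy-to-produce wide initial samples, and ending at a width of order \((\sqrt{n}/\alpha)\,\lambda_1(\Lambda)\), yields DGS at the target precision, which via the first reduction decides \(O(n/\alpha)\)-GapSVP.

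First, I would establish the classical reduction from GapSVP to DGS at width polynomially close to \(\lambda_1(\Lambda)\): in the YES case enough samples cluster closely enough to allow extraction of a short nonzero lattice vector via small integer combinations and norm thresholds, while in the NO case the samples behave essentially like a continuous Gaussian on the span of \(\Lambda\). Second, I would translate an \(\lwe_{q,D_{\Z,\alpha q}}\) oracle into a BDD oracle on the scaled dual lattice \(q\Lambda^*\) at distance roughly \(\alpha q/\sqrt{2}\), which after rescaling gives BDD on \(\Lambda^*\) at a distance controlled by \(\alpha\) and the dual smoothing parameter. The heart of the argument is the quantum step itself: starting from a quantum superposition whose amplitudes over lattice elements in a fundamental domain are proportional to \(\rho_{r}(\cdot)\), apply the dual BDD oracle coherently with an ancilla to entangle and then uncompute the nearest dual-lattice coset label, and then apply a quantum Fourier transform over a sufficiently fine discretization to obtain, up to negligible trace distance, a sample from \(D_{\Lambda,r'}\) at the shrunken width. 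All width parameters must be chosen above the respective smoothing parameters so that the standard Gaussian tail and smoothing-parameter bounds apply.

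The main obstacle will be the quantum error analysis: one must show that the BDD oracle, which is classical and only succeeds on a fraction of inputs, acts coherently on the superposition with only negligible leakage, and that errors in individual amplitudes survive the QFT and compose only mildly over the \(\poly(n)\) iterations required to shrink from an initial exponentially wide sample down to the target width. This will require smoothing-parameter arguments to bound the amplitude mass on inputs outside the oracle's radius of correctness, together with careful triangle-inequality bookkeeping in trace distance across iterations. The base case, producing initial samples at width exponential in \(n\), follows routinely from running LLL on \(\Lambda\) to obtain a reduced basis and then sampling from a suitable rounded continuous Gaussian.
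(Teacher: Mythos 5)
This statement is an external result cited from \cite{regev2005lwe}; the paper you are reading states it as a black box and supplies no proof of its own. There is therefore no in-paper proof against which to compare your sketch; it can only be measured against the cited source.

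Measured that way, your outline is a faithful rendering of the architecture of Regev's quantum reduction: the decomposition into (i) a classical reduction from GapSVP to discrete Gaussian sampling on $\Lambda$, (ii) a classical reduction from bounded-distance decoding on a scaling of $\Lambda^*$ to $\lwe_{q,D_{\Z,\alpha q}}$ (which does involve randomizing over cosets and Gaussian errors to hit the LWE error distribution), and (iii) the coherent quantum step that, given $D_{\Lambda,r}$ samples and the BDD oracle, uncomputes the nearest-dual-coset label on a superposition and applies a QFT to obtain $D_{\Lambda,r'}$ with $r'<r$, iterated down from an LLL-initialized exponentially wide start to a target width of order $\sqrt{n}\,\eta_\eps(\Lambda)/\alpha$, with trace-distance errors controlled by smoothing-parameter bounds over $\poly(n)$ rounds. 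The one step whose mechanism you mischaracterize is the GapSVP-to-DGS reduction: in Regev's Lemma 3.20 the YES/NO distinction is made by a distinguishing test grounded in the smoothing property (your description of the NO case is right, but in the YES case one detects the deviation statistically rather than extracting a short nonzero lattice vector via integer combinations---GapSVP is a promise/decision problem and no witness is produced). That is a detail internal to the cited source and has no bearing on the paper at hand, which uses the theorem only as a cited primitive in Corollary 6.4.
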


If we combine these two results we get the following reduction from GapSVP to $\clwe$ even for super-polynomially small $\beta$.

\begin{corollary}
\label{cor:SVP-LWE-CLWE}
Let \(d, n \in \N\), \(\gamma>0\), \(\beta \in (0, 1)\). Then a polynomial-time algorithm for \(\CLWE_{\beta,\gamma}\) in dimension \(d\) implies a polynomial-time quantum algorithm for \(O(n \sqrt{d}/\beta)\)-GapSVP in dimension \(n\), if \(\gamma = \omega(\sqrt{d \log d})\), \(\log(1/\beta) \le \poly(n)\), and \( 3n\log(d/\beta) \le d \le \poly(n)\). 
\end{corollary}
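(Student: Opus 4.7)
The plan is straightforward chaining: apply Theorem~\ref{thm:LWE-to-CLWE} to convert a polynomial-time $\CLWE_{\beta,\gamma}$ solver in dimension $d$ into a polynomial-time $\lwe_{q,D_{\Z,\sigma'}}$ solver in dimension $n$, and then apply Theorem~\ref{thm:SVP-to-LWE} with $\alpha = \sigma'/q$ to obtain a polynomial-time quantum $O(n/\alpha)$-GapSVP algorithm in dimension $n$. The game is just to pick the intermediate parameters $q$ and $\sigma'$ so that (i) all hypotheses of both theorems hold under the corollary's assumptions, and (ii) $n/\alpha = O(n\sqrt{d}/\beta)$, matching the target approximation factor.

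Concretely, I would set $\sigma' = \Theta(\sqrt{n}\log n)$, which is large enough to satisfy simultaneously $\sigma' \geq \omega(\sqrt{\log n})$ from Theorem~\ref{thm:LWE-to-CLWE} and $\alpha q = \sigma' > 2\sqrt{n}$ from Theorem~\ref{thm:SVP-to-LWE}. I would then choose $q = \lceil c\,\sigma'\sqrt{d}/\beta\rceil$, where $c$ is the absolute constant from Theorem~\ref{thm:LWE-to-CLWE}. This choice immediately enforces $\beta \geq c\,\sigma'\sqrt{d}/q$, and yields
\[
\frac{n}{\alpha} \;=\; \frac{nq}{\sigma'} \;=\; O\!\left(\frac{n\sqrt{d}}{\beta}\right),
\]
which is exactly the approximation factor in the conclusion.

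The only non-cosmetic checks concern the remaining dimensional conditions of Theorem~\ref{thm:LWE-to-CLWE}. The hypothesis $\gamma = \omega(\sqrt{d\log d})$ is assumed in the corollary. The condition $\log(q)/2^n = o(\poly(n)^{-1})$ is vacuous, since our choice gives $\log q = O(\log n + \log d + \log(1/\beta))$, which is polynomial in $n$ under the assumptions $d \leq \poly(n)$ and $\log(1/\beta) \leq \poly(n)$. The only condition requiring actual use of the hypothesis $3n\log(d/\beta) \leq d$ is $d \geq 2n\log q + \omega(\log n)$: substituting the bound on $\log q$ above and using $\log d = O(\log n)$, the right-hand side is at most $2n\log(d/\beta) + \omega(\log n)$, which is comfortably dominated by $3n\log(d/\beta) \leq d$.

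I do not anticipate any genuine obstacle here; the result is essentially bookkeeping on the parameter chain. The slightly delicate point is to make sure that the gap between the constant $3$ in the corollary's hypothesis on $d$ and the constant $2$ appearing in Theorem~\ref{thm:LWE-to-CLWE} is wide enough to absorb both the $\omega(\log n)$ additive slack and the contribution of the $\sigma'\sqrt{d}$ factor inside $\log q$; this is precisely why the hypothesis is stated with the constant $3$ rather than $2$.
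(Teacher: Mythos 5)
Your proposal is correct and follows essentially the same route as the paper: chain Theorem~\ref{thm:LWE-to-CLWE} into Theorem~\ref{thm:SVP-to-LWE} by fixing intermediate $(q,\sigma')$ so that $\alpha q = \sigma'$ clears $2\sqrt{n}$ and $\beta \ge c\sigma'\sqrt{d}/q$, then verify the dimensional side conditions from $3n\log(d/\beta)\le d\le\poly(n)$ and $\log(1/\beta)\le\poly(n)$. The only difference is cosmetic: the paper takes $\sigma' = \Theta(\sqrt{d})$ and $q = 2d/\beta$, which makes $\log q = \log(d/\beta)+O(1)$ exact and the check $d \ge 2n\log q + \omega(\log n)$ immediate from the constant $3$ vs.\ $2$; your choice $\sigma' = \Theta(\sqrt{n}\log n)$, $q = \lceil c\sigma'\sqrt{d}/\beta\rceil$ also works, though your claim that $2n\log q \le 2n\log(d/\beta)$ is not literally true (since $q$ can exceed $d/\beta$) — it only holds up to lower-order terms that are then absorbed against the slack $2n\log d - n\log n \ge n\log n$, so the accounting still closes.
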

\begin{proof}
For the constant $c>0$ from Theorem~\ref{thm:LWE-to-CLWE}, let \(\alpha = c^{-1} \beta / \sqrt{d}\), \(q = 2d / \beta\), \(\sigma' = \alpha q\).

Indeed, $\gamma$ directly satisfies the same assumption, $\beta$ satisfies $\beta \geq c \sigma'\sqrt{d}/q,$ and $q$ satisfies $$\log (q)=\log(2d/\beta)=O\left(\log n+\log (1/\beta)\right) \leq \poly(n)=o(2^n (\poly(n))^{-1})$$ and $d$ satisfies $$d\ge 3n \log(d/\beta) \geq 2n\log (q)+\omega(\log n).$$ Finally, also clearly $\sigma'=c^{-1}\sqrt{d}=\omega(\sqrt{\log n})$ and $\alpha q=c^{-1}\sqrt{d}\ge c^{-1}\sqrt{3n\log(d/\beta)} >2\sqrt{n}.$

Then by Theorem~\ref{thm:LWE-to-CLWE}, there is a polynomial-time algorithm for \(\lwe_{q, D_{\Z, \alpha q}}\) in dimension \(n\). Further, since \(\alpha q >2\sqrt{n}\), by Theorem~\ref{thm:SVP-to-LWE}, there is a polynomial-time algorithm for \(O(n\sqrt{d}/\beta)\)-GapSVP algorithm. 
\end{proof}

Finally, we can use Corollary~\ref{cor:SVP-LWE-CLWE} instead of Theorem~\ref{thm:clwe-hardness} in the proof of Theorem~\ref{thm:mainTheorem} to get the following result.

\begin{theorem} \label{thm:mainTheoremSmallError}
Let \(n\in \N\),  \(\sigma \ge e^{-\poly(n)}\), \(\Omega(n\log(n/\sigma)) \le d \le \poly(n)\), \(\varepsilon = \poly(d)^{-1}\), and \(R = \omega(\sqrt{d\log d\cdot \log (d/\sigma)})\). Then a polynomial-time algorithm that \(\varepsilon \)-weakly learns the function class $\mathcal{F}_R^\NN$, defined in \eqref{eq:NN_class} over Gaussian inputs \(x \overset{\iid}{\sim} N(0,I_d)\) under Gaussian noise \(\xi\overset{\iid}{\sim} N(0,\sigma^2)\) implies a polynomial-time quantum algorithm for \((\poly(n)/\sigma)\)-GapSVP in dimension \(n\). 
\end{theorem}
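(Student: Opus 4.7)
The plan is a direct chaining of Lemma~\ref{lem:clwe-nn-smallnoise} with Corollary~\ref{cor:SVP-LWE-CLWE}, mirroring the structure of the earlier Theorem~\ref{thm:mainTheorem} but replacing Theorem~\ref{thm:clwe-hardness} with the classical-LWE-to-CLWE pipeline, which remains informative in the super-polynomially small noise regime.

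First, I would pick a sequence \(\gamma = \omega(\sqrt{d\log d})\), growing slowly enough so that \(\gamma\sqrt{\omega(\log d) + 2\log(1/\sigma)}\) matches the assumed \(R = \omega(\sqrt{d\log d\cdot \log(d/\sigma)})\) required to invoke Lemma~\ref{lem:clwe-nn-smallnoise}. That lemma then converts the assumed \(\varepsilon\)-weak learner for \(\mathcal{F}_R^\NN\) under Gaussian noise of variance \(\sigma^2\) into a polynomial-time algorithm for \(\clwe_{\beta,\gamma}\) for any \(\beta \le \sigma/\poly(d)\).

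Next, I would fix \(\beta = \sigma/\poly(d)\) for a sufficiently large polynomial factor and verify the three hypotheses of Corollary~\ref{cor:SVP-LWE-CLWE}. The condition \(\gamma = \omega(\sqrt{d\log d})\) holds by construction. The condition \(\log(1/\beta) \le \poly(n)\) follows because \(\log(1/\beta) = \log(1/\sigma) + O(\log d) \le \poly(n)\), using \(\sigma \ge e^{-\poly(n)}\) and \(d \le \poly(n)\). The condition \(3n\log(d/\beta) \le d\) is the place where the hypothesis \(d \ge \Omega(n\log(n/\sigma))\) enters: since \(\log(d/\beta) = O(\log d + \log(1/\sigma)) = O(\log(n/\sigma))\) by \(d \le \poly(n)\), the required inequality holds provided the hidden constant in \(\Omega(\cdot)\) is chosen sufficiently large.

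Applying Corollary~\ref{cor:SVP-LWE-CLWE} then yields a polynomial-time quantum algorithm for \(O(n\sqrt{d}/\beta)\)-GapSVP in dimension \(n\); substituting \(\beta = \sigma/\poly(d)\) together with \(d \le \poly(n)\) collapses this factor to \(\poly(n)/\sigma\), matching the conclusion. I expect no substantive mathematical obstacle; the entire argument is parameter bookkeeping, and the lower bound \(d \ge \Omega(n\log(n/\sigma))\) in the statement is precisely calibrated so that hypothesis (iii) of Corollary~\ref{cor:SVP-LWE-CLWE} is satisfiable at our choice of \(\beta\).
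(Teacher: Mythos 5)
Your proposal is correct and follows essentially the same route as the paper: chain Lemma~\ref{lem:clwe-nn-smallnoise} with Corollary~\ref{cor:SVP-LWE-CLWE}, choosing \(\gamma = \omega(\sqrt{d\log d})\) and \(\beta = \sigma/\poly(d)\). You simply spell out the parameter verifications that the paper leaves implicit, but there is no difference in approach.
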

\begin{proof}
Let \(\beta = \sigma/\poly(d)\), \(\gamma = \omega(\sqrt{d \log d})\), then by Lemma~\ref{lem:clwe-nn-smallnoise}, there is a polynomial-time algorithm for \(\CLWE_{\beta,\gamma}\). By Corollary~\ref{cor:SVP-LWE-CLWE}, there is a polynomial-time quantum algorithm for GapSVP with factor \(O(n\sqrt{d}/\beta) = \poly(n)/\sigma\) in dimension \(n\). 
\end{proof}

By choosing \(\sigma = 2^{-d^{\eta}}\) for constant \(\eta\in (0, 1/2)\), we can get the following corollary.

\begin{corollary}
\label{cor:sup-poly-noise-implies-sub-exp-approx}
For constant \(\eta \in (0, 1/2)\), let \(\sigma = 2^{-d^{\eta}}\), \(\varepsilon = \poly(d)^{-1}\), and \(R = \omega(\sqrt{d^{1+\eta}\log d})\). Then for \(n \in \N\) with \(n= \Theta(d^{1-\eta})\), a polynomial-time algorithm that \(\varepsilon \)-weakly learns the function class $\mathcal{F}_R^\NN$, defined in \eqref{eq:NN_class} over Gaussian inputs \(x \overset{\iid}{\sim} N(0,I_d)\) under Gaussian noise \(\xi\overset{\iid}{\sim} N(0,\sigma^2)\) implies a polynomial-time quantum algorithm for \(2^{O(n^{\frac{\eta}{1-\eta}})}\)-GapSVP in dimension \(n\). 
\end{corollary}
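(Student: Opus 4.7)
The plan is to obtain this corollary as a direct specialization of Theorem~\ref{thm:mainTheoremSmallError} by substituting $\sigma = 2^{-d^{\eta}}$ and then expressing all quantities in terms of $n$ via the relation $n = \Theta(d^{1-\eta})$, equivalently $d = \Theta(n^{1/(1-\eta)})$. The main task is a routine verification that the chosen parameters satisfy the hypotheses of the theorem, followed by computing the resulting GapSVP approximation factor.

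First, I would check the noise condition $\sigma \ge e^{-\poly(n)}$. Since $d^{\eta} = \Theta(n^{\eta/(1-\eta)})$ is polynomial in $n$ (as $\eta/(1-\eta)$ is a positive constant), we have $\log(1/\sigma) = d^{\eta} \log 2 = \poly(n)$, so the required bound holds. Next, I would verify the dimension sandwich $\Omega(n\log(n/\sigma)) \le d \le \poly(n)$: one side is $d = \Theta(n^{1/(1-\eta)}) = \poly(n)$; for the other, compute $\log(n/\sigma) = \log n + d^{\eta}\log 2 = \Theta(d^{\eta})$, hence $n\log(n/\sigma) = \Theta(d^{1-\eta}\cdot d^{\eta}) = \Theta(d)$, and the hidden constant in $n = \Theta(d^{1-\eta})$ can be taken large enough so that the inequality is satisfied.

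For the width hypothesis, the same computation $\log(d/\sigma) = \log d + d^{\eta}\log 2 = \Theta(d^{\eta})$ yields $\sqrt{d\log d\cdot \log(d/\sigma)} = \Theta(\sqrt{d^{1+\eta}\log d})$, so the assumption $R = \omega(\sqrt{d^{1+\eta}\log d})$ is exactly the form required by Theorem~\ref{thm:mainTheoremSmallError}. Applying the theorem then yields a polynomial-time quantum algorithm for $(\poly(n)/\sigma)$-GapSVP in dimension $n$. Substituting $\sigma = 2^{-d^{\eta}} = 2^{-\Theta(n^{\eta/(1-\eta)})}$ gives an approximation factor $\poly(n)\cdot 2^{\Theta(n^{\eta/(1-\eta)})} = 2^{O(n^{\eta/(1-\eta)})}$, matching the claim.

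No genuine obstacle arises here; the entire argument is a bookkeeping exercise tracing exponents through the change of variable between $d$ and $n$. The only point that requires minor care is keeping track of the constants in the $\Theta$ and $\Omega$ bounds so that the inequality $\Omega(n\log(n/\sigma)) \le d$ is literally satisfied, which is achievable by fixing the constant hidden in $n = \Theta(d^{1-\eta})$ sufficiently small (equivalently, the constant in $d = \Theta(n^{1/(1-\eta)})$ sufficiently large).
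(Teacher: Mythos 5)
Your proposal is correct and takes essentially the same route as the paper: a direct specialization of Theorem~\ref{thm:mainTheoremSmallError}, with routine verification that $\sigma \ge e^{-\poly(n)}$, $\Omega(n\log(n/\sigma))\le d\le\poly(n)$, and $R=\omega(\sqrt{d\log d\cdot\log(d/\sigma)})$ hold under the substitution $\sigma=2^{-d^\eta}$ and $n=\Theta(d^{1-\eta})$, followed by the computation $\poly(n)/\sigma=2^{O(n^{\eta/(1-\eta)})}$. The only blemish is a small internal slip in your second paragraph, where you say the hidden constant in $n=\Theta(d^{1-\eta})$ can be taken \emph{large} enough, whereas, as you correctly state in your closing remark, it must be taken sufficiently \emph{small} (equivalently the constant in $d=\Theta(n^{1/(1-\eta)})$ sufficiently large) so that $n\log(n/\sigma)=\Theta(d)$ lands below $d$ rather than above it.
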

\begin{proof}
Since \(\eta\) is a constant, indeed we have \(\sigma \ge e^{-\poly(n)}\), \[d = \Theta(n^{\frac{1}{1-\eta}}) \ge \Omega(n \cdot n^{\frac{\eta}{1-\eta}}) \ge \Omega(n d^{\eta}) = \Omega(n \log (n/\sigma)),\] and \(R = \omega(\sqrt{d\log d\cdot \log(d/\sigma)})\). Thus, from Theorem~\ref{thm:mainTheoremSmallError}, there is a polynomial-time quantum algorithm for GapSVP with factor \(\poly(n)/\sigma = 2^{O(n^{\frac{\eta}{1-\eta}})}\) in dimension \(n\). 
\end{proof}
According to Corollary~\ref{cor:sup-poly-noise-implies-sub-exp-approx}, Theorem~\ref{thm:mainTheoremSmallError} shows an interesting hardness result even when $\sigma$ is super-polynomially small. To make the connection more precise, observe that for any $\delta \in (0,1)$, if we set \(\eta := \frac{\delta}{1+\delta}\in (0, 1/2)\), and \(\sigma = 2^{-d^{\eta}}\), then, due to Corollary~\ref{cor:sup-poly-noise-implies-sub-exp-approx}, the existence of a polynomial-time algorithm that weakly learns one hidden layer neural networks with polynomial width under Gaussian noise with only \(2^{-d^ {\eta} }\) standard deviation, implies a polynomial-time quantum algorithm for \(2^{O(n^{\delta })}\)-GapSVP in dimension \(n\) --- a problem which is considered hard in cryptography and algorithmic theory of lattices. We remind the reader, that the main reason behind this conjectured hardness is that the state-of-the-art (since 1982) powerful LLL algorithm for $\mathrm{GapSVP}$ is only able to achieve an $2^{\Theta(n)}$-approximation, and any improvement on it would be considered a major breakthrough. We also highlight that any such algorithm would break in fact several breakthrough cryptographic constructions such as the recent celebrated result of \cite{jain2021indistinguishability}.

\section{Conclusions}
In this paper, we proved the hardness of learning one hidden layer neural networks with width \(\omega(\sqrt{d \log d})\) under Gaussian input and any inverse-polynomially small Gaussian noise, assuming the hardness of GapSVP with polynomial factors. En route, we proved the hardness of learning Lipschitz periodic functions under Gaussian input and any inverse-polynomially small Gaussian noise. This improves a similar result from \cite{SZB21-cosine-learning}, which proved the hardness for inverse-polynomially small \textit{adversarial} noise. 

Moreover, if we assume the hardness of \(2^{O(d^{\delta })}\)-GapSVP for \(\delta \in (0, 1)\), we also get the hardness of learning one hidden layer neural networks with polynomial width under Gaussian noise with \(2^{-d^{\eta}}\) variance, where \(\eta = \frac{\delta}{1+\delta}\in (0, 1/2)\). 

\printbibliography

@inproceedings{jain2021indistinguishability,
  title={Indistinguishability obfuscation from well-founded assumptions},
  author={Jain, Aayush and Lin, Huijia and Sahai, Amit},
  booktitle={Proceedings of the 53rd Annual ACM SIGACT Symposium on Theory of Computing},
  pages={60--73},
  year={2021}
}

@inproceedings{zarifisrobustly,
  title={Robustly Learning Single-Index Models via Alignment Sharpness},
  author={Zarifis, Nikos and Wang, Puqian and Diakonikolas, Ilias and Diakonikolas, Jelena},
  booktitle={Forty-first International Conference on Machine Learning},
  year={2024}
}

@inproceedings{gupte2022continuous,
  title={Continuous lwe is as hard as lwe \& applications to learning gaussian mixtures},
  author={Gupte, Aparna and Vafa, Neekon and Vaikuntanathan, Vinod},
  booktitle={2022 IEEE 63rd Annual Symposium on Foundations of Computer Science (FOCS)},
  pages={1162--1173},
  year={2022},
  organization={IEEE}
}

@inproceedings{SZB21-cosine-learning,
  author       = {Min Jae Song and
                  Ilias Zadik and
                  Joan Bruna},
  editor       = {Marc'Aurelio Ranzato and
                  Alina Beygelzimer and
                  Yann N. Dauphin and
                  Percy Liang and
                  Jennifer Wortman Vaughan},
  title        = {On the Cryptographic Hardness of Learning Single Periodic Neurons},
  booktitle    = {Advances in Neural Information Processing Systems 34: Annual Conference
                  on Neural Information Processing Systems 2021, NeurIPS 2021, December
                  6-14, 2021, virtual},
  pages        = {29602--29615},
  year         = {2021},
  url          = {https://proceedings.neurips.cc/paper/2021/hash/f78688fb6a5507413ade54a230355acd-Abstract.html},
  timestamp    = {Tue, 03 May 2022 16:20:49 +0200},
  biburl       = {https://dblp.org/rec/conf/nips/SongZB21.bib},
  bibsource    = {dblp computer science bibliography, https://dblp.org}
}

@article{Bach16-approx,
author = {Bach, Francis},
title = {Breaking the curse of dimensionality with convex neural networks},
year = {2017},
issue_date = {January 2017},
publisher = {JMLR.org},
volume = {18},
number = {1},
issn = {1532-4435},
journal = {J. Mach. Learn. Res.},
month = jan,
pages = {629–681},
numpages = {53},
keywords = {non-parametric estimation, neural networks, convex relaxation, convex optimization}
}

@article{awasthi2021efficient,
  title={Efficient algorithms for learning depth-2 neural networks with general relu activations},
  author={Awasthi, Pranjal and Tang, Alex and Vijayaraghavan, Aravindan},
  journal={Advances in Neural Information Processing Systems},
  volume={34},
  pages={13485--13496},
  year={2021}
}

@article{klivans2009cryptographic,
  title={Cryptographic hardness for learning intersections of halfspaces},
  author={Klivans, Adam R and Sherstov, Alexander A},
  journal={Journal of Computer and System Sciences},
  volume={75},
  number={1},
  pages={2--12},
  year={2009},
  publisher={Elsevier}
}

@inproceedings{diakonikolas2022non,
  title={Non-gaussian component analysis via lattice basis reduction},
  author={Diakonikolas, Ilias and Kane, Daniel},
  booktitle={Conference on Learning Theory},
  pages={4535--4547},
  year={2022},
  organization={PMLR}
}

@inproceedings{zadik2022lattice,
  title={Lattice-based methods surpass sum-of-squares in clustering},
  author={Zadik, Ilias and Song, Min Jae and Wein, Alexander S and Bruna, Joan},
  booktitle={Conference on Learning Theory},
  pages={1247--1248},
  year={2022},
  organization={PMLR}
}

@article{zadik2018high,
  title={High dimensional linear regression using lattice basis reduction},
  author={Zadik, Ilias and Gamarnik, David},
  journal={Advances in Neural Information Processing Systems},
  volume={31},
  year={2018}
}

@inproceedings{andoni2017correspondence,
  title={Correspondence retrieval},
  author={Andoni, Alexandr and Hsu, Daniel and Shi, Kevin and Sun, Xiaorui},
  booktitle={Conference on Learning Theory},
  pages={105--126},
  year={2017},
  organization={PMLR}
}

@article{gamarnik2021inference,
  title={Inference in high-dimensional linear regression via lattice basis reduction and integer relation detection},
  author={Gamarnik, David and K{\i}z{\i}lda{\u{g}}, Eren C and Zadik, Ilias},
  journal={IEEE Transactions on Information Theory},
  volume={67},
  number={12},
  pages={8109--8139},
  year={2021},
  publisher={IEEE}
}

@inproceedings{daniely2021local,
  title={From local pseudorandom generators to hardness of learning},
  author={Daniely, Amit and Vardi, Gal},
  booktitle={Conference on Learning Theory},
  pages={1358--1394},
  year={2021},
  organization={PMLR}
}

@inproceedings{chen2022learning,
  title={Learning deep relu networks is fixed-parameter tractable},
  author={Chen, Sitan and Klivans, Adam R and Meka, Raghu},
  booktitle={2021 IEEE 62nd Annual Symposium on Foundations of Computer Science (FOCS)},
  pages={696--707},
  year={2022},
  organization={IEEE}
}

@inproceedings{andoni2019attribute,
  title={Attribute-efficient learning of monomials over highly-correlated variables},
  author={Andoni, Alexandr and Dudeja, Rishabh and Hsu, Daniel and Vodrahalli, Kiran},
  booktitle={Algorithmic Learning Theory},
  pages={127--161},
  year={2019},
  organization={PMLR}
}

@inproceedings{damian2024computational,
  title={Computational-Statistical Gaps in Gaussian Single-Index Models},
  author={Damian, Alex and Pillaud-Vivien, Loucas and Lee, Jason and Bruna, Joan},
  booktitle={The Thirty Seventh Annual Conference on Learning Theory},
  pages={1262--1262},
  year={2024},
  organization={PMLR}
}

@inproceedings{zhang2019learning,
  title={Learning one-hidden-layer relu networks via gradient descent},
  author={Zhang, Xiao and Yu, Yaodong and Wang, Lingxiao and Gu, Quanquan},
  booktitle={The 22nd international conference on artificial intelligence and statistics},
  pages={1524--1534},
  year={2019},
  organization={PMLR}
}

@inproceedings{bakshi2019learning,
  title={Learning two layer rectified neural networks in polynomial time},
  author={Bakshi, Ainesh and Jayaram, Rajesh and Woodruff, David P},
  booktitle={Conference on Learning Theory},
  pages={195--268},
  year={2019},
  organization={PMLR}
}

@inproceedings{zhong2017recovery,
  title={Recovery guarantees for one-hidden-layer neural networks},
  author={Zhong, Kai and Song, Zhao and Jain, Prateek and Bartlett, Peter L and Dhillon, Inderjit S},
  booktitle={International conference on machine learning},
  pages={4140--4149},
  year={2017},
  organization={PMLR}
}

@article{janzamin2015beating,
  title={Beating the perils of non-convexity: Guaranteed training of neural networks using tensor methods},
  author={Janzamin, Majid and Sedghi, Hanie and Anandkumar, Anima},
  journal={arXiv preprint arXiv:1506.08473},
  year={2015}
}

@inproceedings{bruna2020continuous,
  title={Continuous LWE},
  author={Bruna, Joan and Regev, Oded and Song, Min Jae and Tang, Yi},
  booktitle={Proceedings of the 53rd Annual ACM SIGACT Symposium on Theory of Computing},
  year={2021}
}

@article{lenstra1982factoring,
	Author = {Lenstra, Arjen Klaas and Lenstra, Hendrik Willem and Lov{\'a}sz, L{\'a}szl{\'o}},
	Journal = {Mathematische Annalen},
	Number = {4},
	Pages = {515--534},
	Publisher = {Springer},
	Title = {Factoring polynomials with rational coefficients},
	Volume = {261},
	Year = {1982}}

@misc{ge2017learning,
  title={Learning one-hidden-layer neural networks with landscape design},
  author={Ge, Rong and Lee, Jason D and Ma, Tengyu},
  eprint={1711.00501},
  archivePrefix={arxiv},
  year={2017}
}

@article{brutzkus2017sgd,
  title={SGD learns over-parameterized networks that provably generalize on linearly separable data},
  author={Brutzkus, Alon and Globerson, Amir and Malach, Eran and Shalev-Shwartz, Shai},
  eprint={1710.10174},
  archivePrefix={arxiv},
  year={2017}
}

@inproceedings{goel2020superpolynomial,
  title={Superpolynomial lower bounds for learning one-layer neural networks using gradient descent},
  author={Goel, Surbhi and Gollakota, Aravind and Jin, Zhihan and Karmalkar, Sushrut and Klivans, Adam},
  booktitle={International Conference on Machine Learning},
  pages={3587--3596},
  year={2020},
  organization={PMLR}
}

@inproceedings{diakonikolas2020algorithms,
  title={Algorithms and sq lower bounds for pac learning one-hidden-layer relu networks},
  author={Diakonikolas, Ilias and Kane, Daniel M and Kontonis, Vasilis and Zarifis, Nikos},
  booktitle={Conference on Learning Theory},
  pages={1514--1539},
  year={2020},
  organization={PMLR}
}

@misc{allen2018learning,
  title={Learning and generalization in overparameterized neural networks, going beyond two layers},
  author={Allen-Zhu, Zeyuan and Li, Yuanzhi and Liang, Yingyu},
  eprint={1811.04918},
  archivePrefix={arxiv},
  year={2018}
}

@article{chen2022hardness,
  title={Hardness of Noise-Free Learning for Two-Hidden-Layer Neural Networks},
  author={Chen, Sitan and Gollakota, Aravind and Klivans, Adam R and Meka, Raghu},
  journal={arXiv preprint arXiv:2202.05258},
  year={2022}
}

@article{khot2005hardness,
author = {Khot, Subhash},
title = {Hardness of Approximating the Shortest Vector Problem in Lattices},
year = {2005},
issue_date = {September 2005},
publisher = {Association for Computing Machinery},
address = {New York, NY, USA},
volume = {52},
number = {5},
issn = {0004-5411},
url = {https://doi.org/10.1145/1089023.1089027},
doi = {10.1145/1089023.1089027},
journal = {J. ACM},
month = sep,
pages = {789–808},
numpages = {20},
keywords = {hardness of approximation, lattices, shortest vector problem, cryptography, Approximation algorithms}
}

@inproceedings{haviv2007tensor,
author = {Haviv, Ishay and Regev, Oded},
title = {Tensor-Based Hardness of the Shortest Vector Problem to within Almost Polynomial Factors},
year = {2007},
isbn = {9781595936318},
publisher = {Association for Computing Machinery},
address = {New York, NY, USA},
url = {https://doi.org/10.1145/1250790.1250859},
doi = {10.1145/1250790.1250859},
abstract = {We show that unless NP ⊆ RTIME (2poly(log n)), for any ε &gt; 0 there is no polynomial-time algorithm approximating the Shortest Vector Problem (SVP) on n-dimensional lattices inthe lp norm (1 ≤q p&lt;∞) to within a factor of 2(log n)1-ε. This improves the previous best factor of 2(logn)1/2-ε under the same complexity assumption due to Khot. Under the stronger assumption NP ࣰ RSUBEXP, we obtain a hardness factor of nc/log log n for some c &gt; 0.Our proof starts with Khot's SVP instances from that are hard to approximate to within some constant. To boost the hardness factor we simply apply the standard tensor product oflattices. The main novel part is in the analysis, where we show that Khot's lattices behave nicely under tensorization. At the heart of the analysis is a certain matrix inequality which was first used in the context of lattices by de Shalit and Parzanchevski.},
booktitle = {Proceedings of the Thirty-Ninth Annual ACM Symposium on Theory of Computing},
pages = {469–477},
numpages = {9},
keywords = {lattices, tensor product, hardness of approximation},
location = {San Diego, California, USA},
series = {STOC '07}
}

@inbook{micciancio2009lattice,
author="Micciancio, Daniele
and Regev, Oded",
title="Lattice-based Cryptography",
bookTitle="Post-Quantum Cryptography",
year="2009",
publisher="Springer Berlin Heidelberg",
address="Berlin, Heidelberg",
pages="147--191",
isbn="978-3-540-88702-7",
doi="10.1007/978-3-540-88702-7_5",
url="https://doi.org/10.1007/978-3-540-88702-7_5"
}

@inproceedings{regev2005lwe,
    author = {Regev, Oded},
    title = {On lattices, learning with errors, random linear codes, and cryptography},
    booktitle = {STOC},
    series = {STOC '05},
    year = {2005},
    isbn = {1-58113-960-8},
    pages = {84--93},
    numpages = {10},
    doi = {10.1145/1060590.1060603},
    keywords = {computational learning theory, cryptography, lattices, public key encryption, quantum computing, statistical queries},
}

\end{document}